\def \E {\mathbb{E}}
\def \x {\mathbf{x}}
\def \indicator {\mathds{1}}
\def \Y {\mathcal{Y}}
\def \X {\mathcal{X}}
\DeclareMathOperator*{\argmax}{arg\,max}
\DeclareMathOperator*{\argmin}{arg\,min}
\renewcommand{\tilde}{\widetilde}
\renewcommand{\hat}{\widehat}
\let\norm\undefined 
\DeclarePairedDelimiter\norm{\lVert}{\rVert}
\newtheorem{myThm}{Theorem}
\newtheorem{myProp}{Proposition}
\theoremstyle{definition}
\newtheorem{myDef}{Definition}
\newtheorem{myRemark}{Remark}
\title{An Unbiased Risk Estimator for Learning with Augmented Classes}
\author{ Yu-Jie Zhang, Peng Zhao, Lanjihong Ma, Zhi-Hua Zhou\\
National Key Laboratory for Novel Software Technology, \\
Nanjing University, Nanjing 210023, China\\
\texttt{\{zhangyj, zhaop, maljh, zhouzh\}@lamda.nju.edu.cn} \\
}
\begin{document}

\maketitle

\begin{abstract}
This paper studies the problem of learning with augmented classes (LAC), where augmented classes unobserved in the training data might emerge in the testing phase. Previous studies generally attempt to discover augmented classes by exploiting geometric properties, achieving inspiring empirical performance yet lacking theoretical understandings particularly on the generalization ability. In this paper we show that, by using unlabeled training data to approximate the potential distribution of augmented classes, an unbiased risk estimator of the testing distribution can be established for the LAC problem under mild assumptions, which paves a way to develop a sound approach with theoretical guarantees. Moreover, the proposed approach can adapt to complex changing environments where augmented classes may appear and the prior of known classes may change simultaneously. Extensive experiments confirm the effectiveness of our proposed approach.

\end{abstract}

\section{Introduction}
\label{sec:intro}
Recent advances in machine learning encourage its application in high-stake scenarios, where the robustness is the central requirement~\citep{journals/aim/Dietterich17,journals/fcsc/Zhou16a}. A robust learning system should be able to handle the distribution change in the non-stationary environments~\citep{book/sugiyama/2012,journals/csur/GamaZBPB14,TKDE'19:DFOP}. In this paper, we focus on the problem of learning with augmented classes (LAC)~\citep{conf/aaai/DaYZ14}, where the class distribution changes during the learning process---some augmented classes unobserved in training data might emerge in testing. To make reliable predictions, desired learning systems are required to identify augmented classes and retain good generalization performance over the testing distribution.

The main challenge of the LAC problem lies in how to depict relationships between known and augmented classes. A typical solution is to learn a compact geometric description of the known classes and take those beyond the description as augmented classes, where the anomaly detection or novelty detection approaches can be employed (such as one-class SVM~\citep{journals/neco/ScholkopfPSSW01,journals/ml/TaxD04}, kernel density estimation~\citep{parzen-1962estimation,journals/jmlr/KimS12} and iForest~\citep{conf/icdm/LiuTZ08}).~\citet{conf/aaai/DaYZ14} give the name of LAC and employ the low-density separation assumption to adjust the decision boundaries in a multi-class situation. In addition to the effort of machine learning community, the computer vision and pattern recognition communities also contribute to the study of the problem (or its cousin).~\citet{Scheirer_2013_TPAMI} propose the notion of open space risk to penalize predictions outside the support of training data, based on which several approaches are developed~\citep{Scheirer_2013_TPAMI,journals/pami/ScheirerJB14}. Later, approaches based on the nearest neighbor~\citep{journals/ml/Mendes-JuniorSW17} and extreme value theory~\citep{journals/pami/RuddJSB18} are also developed. More discussions on related topics are deferred to Section~\ref{sec:related-work}. 

Although various approaches are proposed with nice performance and some of them conduct theoretical analysis, generalization properties of the LAC problem is less explored.~\citet{Scheirer_2013_TPAMI,journals/pami/ScheirerJB14,journals/pami/RuddJSB18} formally use the open space risk or extreme value theory to identify augmented classes, but the generalization error of learned models is not further analyzed. There are also works~\citep{journals/jmlr/ScottB09,journals/jmlr/BlanchardLS10,conf/icml/LiuGDFH18} focusing on the Neyman-Pearson (NP) classification, which controls the novelty detection ratio of augmented classes or false positive ratio of known classes with the constraint on another. By using unlabeled data, authors develop approaches with one-side PAC-style guarantees for the binary NP classification whereas the generalization ability for the LAC problem is not studied.

To design approaches with generalization error guarantees for the LAC problem, it is necessary to assess the distribution of augmented classes in the training stage. Note that in many applications, during the training stage, in addition to labeled data, there are abundant unlabeled training data available. In this paper, we show that by exploiting unlabeled training data, an \emph{unbiased} risk estimator over the \emph{testing} distribution can be established under mild assumptions. The intuition is that, though instances from augmented classes are unobserved from labeled data, their distribution information may be contained in unlabeled data and estimated by separating the distribution of known classes from unlabeled data (Figure~\ref{fig:approximation}). More concretely, we propose the \emph{class shift condition} to model the testing distribution as a mixture of known and augmented classes' distributions. Under such a condition, classifiers' risk over testing distribution can be estimated in the training stage, where minimizing its empirical estimator finally gives our \textsc{Eulac} approach, short for \underline{E}xploiting \underline{U}nlabeled data for \underline{L}earning with \underline{A}ugmented \underline{C}lasses. Moreover, the \textsc{Eulac} approach can further take the prior change on known classes into account, which enables its adaptivity to complex changing environments.

\begin{figure}[!t]
\centering
\includegraphics[clip, trim=3.27cm 7.1cm 0.55cm 3.8cm, width=0.8\textwidth]{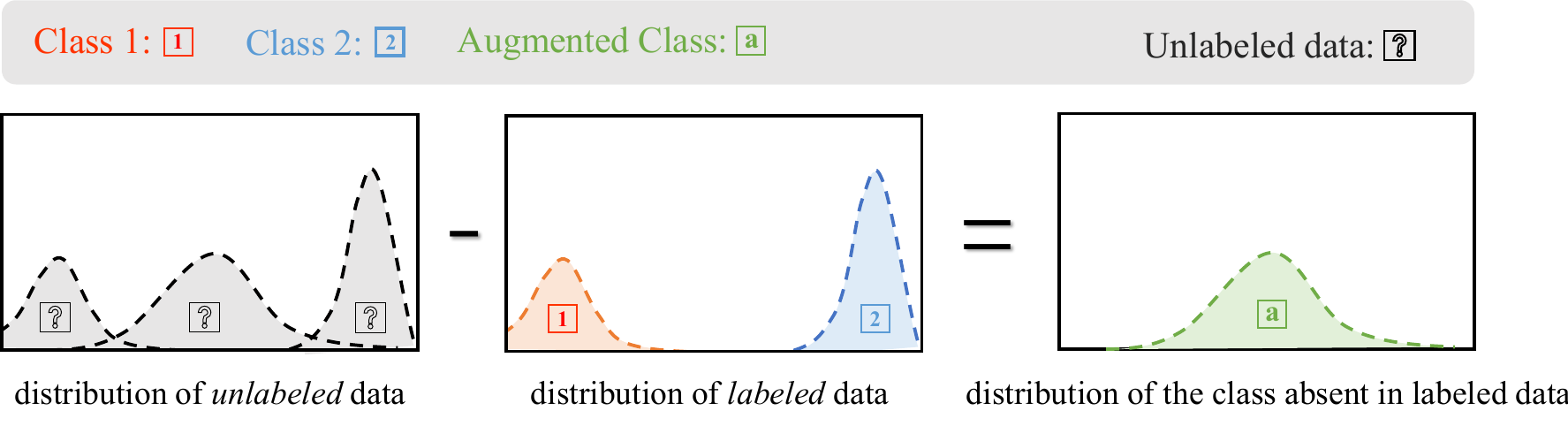}
\vspace{-3mm}
\caption{\small{Distribution of augmented classes can be estimated by those of labeled and unlabeled training data.}}
\vspace{-5mm}
\label{fig:approximation}
\end{figure}

\textsc{Eulac} enjoys several favorable properties. Theoretically, our approach has both asymptotic (consistency) and non-asymptotic (generalization error bound) guarantees. Notably, the non-asymptotic analysis further justifies the capability of our approach in exploiting unlabeled data, since the generalization error becomes smaller with an increasing number of unlabeled data. Moreover, extensive experiments validate the effectiveness of our approach. It is noteworthy to mention that our approach can now perform the standard cross validation procedure to select parameters, while most geometric-based approaches cannot due to the unavailability of the testing distribution, and their parameters setting heavily relies on the experience. We summarize main contributions of this paper as follows.
 \begin{compactitem}
     \item[(1)] We propose the \emph{class shift condition} to characterize the connection between known and augmented classes for the learning with augmented class problem.
     \item[(2)] Based on the class shift condition, we establish an \emph{unbiased risk estimator} over the testing distribution for the LAC problem by exploiting the unlabeled data. Similar results are also attainable for a general setting of class distribution change.
     \item[(3)] We develop our \textsc{Eulac} approach with the unbiased risk estimator, whose theoretical effectiveness is proved by both consistency and generalization error analyses. We also conduct extensive experiments to validate its empirical superiority.
 \end{compactitem}

\section{An Unbiased Risk Estimator for LAC problem} 
In this section, we formally describe the LAC problem, followed by the introduction of the class shift condition, based on which we develop the unbiased risk estimator over the testing distribution. Moreover, we show the potential of our approach for adapting to complex changing environments.

\subsection{Problem Setup and Class Shift Condition}
\textbf{LAC problem.} In the training stage, the learner collects a labeled dataset $D_{L}=\{(\mathbf{x}_i,y_i)\}_{i=1}^{n_l}$ sampled from distribution of known classes $P_\textsf{kc}$ defined over $\mathcal{X}\times\mathcal{Y}'$, where $\X$ denotes the feature space and $\mathcal{Y}'=\{1,\dots,K\}$ is the label space of $K$ known classes. In the testing stage, the learner requires to predict instances from the testing distribution $P_{te}$, where augmented classes not observed before might emerge. Since the specific partition of augmented classes is unknown, the learner will predict all of them as a single augmented class \textsf{ac}. So the testing distribution is defined over $\X\times\Y$, where $\Y = \{1,\dots,K,\textsf{ac}\}$ is the augmented label space. The goal of the learner is to train a classifier $f:\X\mapsto\Y$ achieving good generalization ability by minimizing the expected risk $R(f) = \mathbb{E}_{(\mathbf{x},y)\sim P_{te}}\ [\indicator(f(\x)\neq y)]$ over the \emph{testing} distribution, where $\indicator(\cdot)$ is the indicator function. 

\textbf{Unlabeled data.} In our setup, the learner additionally receives a set of \emph{unlabeled data} $D_U=\{\x_i\}_{i=1}^{n_u}$ sampled from the testing distribution and hopes to use it to enhance performance of the trained classifier. This learning scenario happens when labeled training data fail to capture certain classes of the testing distribution due to the class distribution change, while we can easily collect a vast amount of unlabeled data from current environments. Essentially, the missed class information has already been contained in the training data (unlabeled data) though is not revealed in the supervision (labeled training data). We thus prefer to call such classes as the ``augmented class'' instead of ``new class''. 

\textbf{Class shift condition.} Although not explicitly stated, previous works~\citep{Scheirer_2013_TPAMI,conf/aaai/DaYZ14,journals/ml/Mendes-JuniorSW17} essentially rely on the assumption that the distribution of known classes remains unchanged when augmented classes emerge. Following the same spirit, we introduce the following \emph{class shift condition} for the LAC problem to rigorously depict the connection between known and augmented class distributions.

\begin{myDef}[Class Shift Condition]
  \label{assumption:new class}
 The testing distribution $P_{te}$, the distribution of known classes $P_\textsf{kc}$ and the distribution of augmented classes $P_{\textsf{ac}}$ are under the \emph{class shift condition}, if
  \begin{equation}
  \label{eq:basic-assumption}
  P_{te} = \theta\cdot P_\textsf{kc} + (1-\theta)\cdot P_{\textsf{ac}},
  \end{equation}
  where $\theta\in[0,1]$ is a certain mixture proportion.\footnote{We redefine all the distributions over the space $\X\times\Y$, where $p_{XY}^\textsf{kc}(\x,\textsf{ac}) = 0$ for all $\x\in\X$}
\end{myDef}
Class shift condition states that the testing distribution can be regarded as a mixture of those of known and augmented classes with a certain proportion $\theta$, based on which we can evaluate classifiers' risk over the testing distribution with labeled and unlabeled training data.

\subsection{Convex Unbiased Risk Estimator}
\label{sec:approximating testing distribution}
This part, we develop an unbiased risk estimator for the LAC problem under the class shift condition. We first introduce the notation conventions. The density function is denoted by the lowercase $p$, and the joint, conditional and marginal density functions are indicated by the subscripts $XY$, $X\vert Y$ ($Y\vert X$) and $X$ ($Y$). For instance, $p_{X}^{te}(\x)$ refers to the marginal density of the testing distribution over $\mathcal{X}$.

\textbf{OVR scheme.} Suppose the joint testing distribution were available, the LAC problem would degenerate to standard multi-class classification, which can be then addressed by existing approaches. Among those approaches, we adopt the one-versus-rest (OVR) strategy, which enjoys sound theoretical guarantees~\citep{journals/jmlr/Zhang04a} and nice practical performance~\citep{journals/jmlr/RifkinK03}. The risk minimization is formulated as,
\begin{equation}
\label{eq:OVR-estimator}
 \min_{f_1,\dots,f_{K+1}} R_{\psi}(f_1,\dots,f_{K+1}) = \mathbb{E}_{(\mathbf{x},y)\sim P_{te}}\Big[\psi(f_y(\mathbf{x}))+\sum\nolimits_{k=1,k\neq y}^{K+1}\psi(-f_k(\mathbf{x}))\Big],
\end{equation} 
where $f_{k}$ is the classifier for the $k$-th class, $k=1,\ldots,K$; and $f_{\textsf{ac}}$ is the classifier for the augmented class. For simplicity, we substitute $f_{\textsf{ac}}$ with $f_{K+1}$ in the formulation. $\psi:\mathbb{R}\mapsto [0,+\infty)$ is a binary surrogate loss such as hinge loss. The OVR scheme predicts by $f(\x) = \argmax_{k \in \{1,\dots,K,\textsf{ac}\}} f_k(\x)$.

\textbf{Approximating the testing distribution.} However, \emph{the joint testing distribution is unavailable in the training stage due to the absence of labeled instances from augmented classes}. Fortunately, we show that given the mixture proportion $\theta$, it can be approximated with the labeled and unlabeled data. Under the class shift condition, the joint density of the testing distribution can be decomposed as
\begin{align}
    p^{te}_{XY}(\mathbf{x},y) \overset{\eqref{eq:basic-assumption}}{=}~& \theta\cdot p^\textsf{kc}_{XY}(\mathbf{x},y) + (1-\theta)\cdot p^{\textsf{ac}}_{XY}(\mathbf{x},y)\nonumber\\
  =~& \theta\cdot p^\textsf{kc}_{XY}(\mathbf{x},y) + \indicator(y=\textsf{ac})\cdot(1-\theta)\cdot p^{\textsf{ac}}_X(\mathbf{x})\label{eq:EULAC-decompose},
\end{align}
where the last equality follows from the fact that $p^{\textsf{ac}}_{XY}(\x,y) = 0$ holds for all $\x\in\X$ and $y\neq \textsf{ac}$. The first part $p_{XY}^\textsf{kc}(\mathbf{x},y)$ is accessible via the labeled data. The only unknown term is the second part, the marginal density of the augmented class $p_{X}^{\textsf{ac}}(\mathbf{x})$. Under the class shift condition, it can be evaluated by separating the distribution of labeled data from unlabeled data as
    \begin{equation}
    \label{eq:density-reduction}
        (1-\theta)\cdot p^{\textsf{ac}}_X(\mathbf{x}) =  p^{te}_X(\mathbf{x})-\theta\cdot p^\textsf{kc}_X(\mathbf{x}).
    \end{equation}
Thus, by plugging~\eqref{eq:density-reduction} into~\eqref{eq:EULAC-decompose}, the testing distribution becomes attainable, and consequently, we can evaluate the OVR risk $R_{\psi}$ in the training stage through an equivalent risk $R_{LAC}$.
\begin{myProp} \label{prop:LACU-equal-nonconvex}
Under the class shift condition, for measurable functions $f_1,\dots,f_K,f_{\textsf{ac}}$, we have $R_{\psi}(f_1,\dots,f_K,f_{\textsf{ac}}) = R_{LAC}(f_1,\dots,f_K,f_{\textsf{ac}})$, where $R_{LAC}$ is defined as,
\begin{equation}
\label{eq:expect-PU-nonconvex}
\begin{split}
R_{LAC} = {}& \theta\cdot\mathbb{E}_{(\mathbf{x},y)\sim P_\textsf{kc}}\left[\psi(f_y(\mathbf{x}))-\psi(-f_y(\mathbf{x})) + \psi(-f_{\textsf{ac}}(\mathbf{x}))-\psi(f_{\textsf{ac}}(\mathbf{x}))\right] \\
& + \mathbb{E}_{\mathbf{x}\sim p^{te}_X(\x)}\Big[\psi(f_{\textsf{ac}}(\mathbf{x}))+\sum\nolimits_{k=1}^K\psi(-f_{k}(\mathbf{x}))\Big].
\end{split}
\end{equation}
\end{myProp}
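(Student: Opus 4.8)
The plan is to prove the identity by a direct, purely algebraic manipulation of expectations, invoking only the class shift decomposition~\eqref{eq:basic-assumption} and its marginal consequence~\eqref{eq:density-reduction}. No further structure of $\psi$ is needed.

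\textbf{Step 1 (regroup $R_\psi$).} Starting from the definition~\eqref{eq:OVR-estimator} and recalling $f_{K+1}=f_{nc}$, I would rewrite the summand using
\[
\psi(f_y(\x))+\sum_{k=1,k\neq y}^{K+1}\psi(-f_k(\x))
=\bigl(\psi(f_y(\x))-\psi(-f_y(\x))\bigr)+\psi(-f_{nc}(\x))+\sum_{k=1}^{K}\psi(-f_k(\x)),
\]
so that
\[
R_\psi=\E_{(\x,y)\sim P_{te}}\!\bigl[\psi(f_y(\x))-\psi(-f_y(\x))\bigr]
+\E_{\x\sim p^{te}_X(\x)}\!\Bigl[\psi(-f_{nc}(\x))+\sum_{k=1}^{K}\psi(-f_k(\x))\Bigr],
\]
where the second expectation now depends on $\x$ only and hence is taken over the marginal $p^{te}_X$.

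\textbf{Step 2 (split by class shift) and Step 3 (eliminate $p^{new}_X$).} I would then apply the class shift condition to the first, label-dependent term. Since $p^{te}_{XY}(\x,y)=\theta\,p^{tr}_{XY}(\x,y)+(1-\theta)\,\indicator(y=nc)\,p^{new}_X(\x)$ and the label is $nc$ almost surely under $P_{new}$, this term equals
\[
\theta\,\E_{(\x,y)\sim P_{tr}}\!\bigl[\psi(f_y(\x))-\psi(-f_y(\x))\bigr]
+(1-\theta)\,\E_{\x\sim p^{new}_X}\!\bigl[\psi(f_{nc}(\x))-\psi(-f_{nc}(\x))\bigr].
\]
Next I would substitute $(1-\theta)p^{new}_X(\x)=p^{te}_X(\x)-\theta\,p^{tr}_X(\x)$ from~\eqref{eq:density-reduction}, turning the last expectation into an expectation over $p^{te}_X$ minus $\theta$ times one over $p^{tr}_X$.

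\textbf{Step 4 (collect) and the main difficulty.} Finally I would regroup: the $\E_{\x\sim p^{te}_X}[\psi(f_{nc}(\x))-\psi(-f_{nc}(\x))]$ piece combines with the second term of Step 1 to give exactly $\E_{\x\sim p^{te}_X}[\psi(f_{nc}(\x))+\sum_{k=1}^K\psi(-f_k(\x))]$, while the two $P_{tr}$-expectations merge (using that a function of $\x$ integrates the same against $p^{tr}_X$ and $P_{tr}$) into $\theta\,\E_{(\x,y)\sim P_{tr}}[\psi(f_y(\x))-\psi(-f_y(\x))+\psi(-f_{nc}(\x))-\psi(f_{nc}(\x))]$, which is precisely $R'_{LAC}$. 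There is no deep obstacle here; the proof is essentially bookkeeping. The only points demanding care are (i) consistently treating the index $K+1$ as the new-class classifier $f_{nc}$ when peeling it out of the sum, (ii) using that $p^{tr}_{XY}(\cdot,nc)\equiv 0$ so that the $P_{tr}$-expectation ranges only over known labels $y\in\{1,\dots,K\}$, matching $R'_{LAC}$, and (iii) the implicit assumption that the relevant expectations are well defined (e.g.\ $\psi$ real-valued, as for hinge/exponential/logistic losses) so that differences such as $\psi(f_y(\x))-\psi(-f_y(\x))$ never produce an $\infty-\infty$ ambiguity.
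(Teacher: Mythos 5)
Your proposal is correct and uses essentially the same argument as the paper: both rest on splitting $P_{te}$ via the class shift condition, replacing $(1-\theta)p^{new}_X$ by $p^{te}_X-\theta p^{tr}_X$, and then collecting terms. The only cosmetic difference is that you isolate the label-dependent part $\psi(f_y)-\psi(-f_y)$ before applying the class-shift decomposition, whereas the paper decomposes the full OVR integrand first and cancels afterwards; the ingredients and the resulting bookkeeping are identical.
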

\begin{myRemark}
  We can assess $R_{LAC}$ during training as the distribution of known classes $ P_\textsf{kc}$ and marginal testing distribution $p_X^{te}(\x)$ can be estimated by labeled and unlabeled training data, respectively.
\end{myRemark}
The remaining issue for the LAC risk $R_{LAC}$ is the non-convexity caused by terms $-\psi(-f_y(\x))$ and $-\psi(f_{\textsf{ac}}(\x))$, which are non-convex w.r.t the classifiers even with the convex binary surrogate loss $\psi$. Inspired by studies~\citep{conf/nips/NatarajanDRT13,conf/icml/PlessisNS15}, we can eliminate the non-convexity by carefully choosing the surrogate loss satisfying $\psi(z) - \psi(-z) = -z$ for all $z\in\mathbb{R}$, and thereby $R_{LAC}$ enjoys a convex formulation
\begin{equation}
\label{eq:expect-PU}
R_{LAC} = \theta\cdot\mathbb{E}_{(\mathbf{x},y)\sim P_\textsf{kc}}\left[f_{\textsf{ac}}(\mathbf{x})-f_y(\mathbf{x})\right] + \mathbb{E}_{\mathbf{x}\sim p^{te}_{X}(\x)}\Big[\psi(f_{\textsf{ac}}(\mathbf{x}))+\sum\nolimits_{k=1}^K\psi(-f_{k}(\mathbf{x}))\Big].
\end{equation}
Many loss functions satisfy the above condition~\citep{conf/icml/PlessisNS15}, such as logistic loss $\psi(z) = \log(1+\exp(-z))$, square loss $\psi(z) = (1-z)^2/4$ and double hinge loss $\psi(z) = \max(-z, \max(0,(1-z)/2))$. Since LAC risk $R_{LAC}$ equals to the ideal OVR risk $R_{\psi}$, its empirical estimator $\hat{R}_{LAC}$ is \emph{unbiased} over the testing distribution. We can thus perform the standard empirical risk minimization. Finally, we note that Proposition~\ref{prop:LACU-equal-nonconvex} can be generalized for arbitrary multiclass losses, if the convexity is not required, where more multiclass and binary losses can be used. We will take this as a future work.

\subsection{Convex Unbiased Risk Estimator under Generalized Class Shift Condition}
The class shift condition in Definition~\ref{assumption:new class} models the appearance of augmented classes with the assumption that the distribution of known classes is identical to that in the testing stage. In real-world applications, however, the environments might be more complex, where the distribution of known classes could also shift. We consider a specific kind of class distribution change: in addition to the emerging augmented classes, the prior of each class $p^{te}_Y(y)$ varies from labeled data to testing data, while their conditional density remains the same, namely $p_{X\mid Y}^{te}(\x | y) = p_{X\mid Y}^\textsf{kc}(\x | y)$ for all $y\in[K]$. To this end, we propose following \emph{generalized class shift condition} to model such a case by further decomposing the distribution of known classes in the testing stage as a mixture of several components,
\begin{equation}
  \label{eq:basic-assumptio-target-shift}
P_{te} = \sum\nolimits_{k=1}^{K} \theta_{te}^k\cdot P_{k} + \bigg(1-\sum\nolimits_{k=1}^{K} \theta^k_{te}\bigg)\cdot P_{\textsf{ac}},
  \end{equation}
where $P_{k}$ is the distribution of the $k$-th known class whose marginal density equals to $p_{X| Y}^\textsf{kc}(\x|k)$, and $\theta_{te}^k = p_{Y}^{te}(k)$ is the prior of $k$-th known class in testing, for all $k \in [K]$. When there is no distribution change on known classes, the generalized class shift condition recovers the vanilla version in~\eqref{eq:basic-assumption}.

With the generalized class shift condition~\eqref{eq:basic-assumptio-target-shift}, following the similar argument in Section~\ref{sec:approximating testing distribution}, we can evaluate the OVR risk for the testing distribution even if the prior of known classes has changed.
\begin{myProp} \label{prop:LACU-equal-shift}
Under the generalized class shift condition~\eqref{eq:basic-assumptio-target-shift}, by choosing the surrogate loss function satisfying $\psi(z)-\psi(z) = -z$ for all $z\in\mathbb{R}$, for measurable functions $f_1,\dots,f_K,f_{\textsf{ac}}$, we have $R_{\psi}(f_1,\dots,f_K,f_{\textsf{ac}}) = R_{LAC}^{shift}(f_1,\dots,f_K,f_{\textsf{ac}})$, where $R_{LAC}^{shift}$ is defined as,
\begin{equation*}
\label{eq:expect-shift}
\begin{split}
R_{LAC}^{shift} = & \sum\nolimits_{k=1}^K\theta_{te}^k\cdot \mathbb{E}_{(\mathbf{x},y)\sim P_{k}}\left[f_{\textsf{ac}}(\mathbf{x})-f_y(\mathbf{x}) \right]  + \mathbb{E}_{\mathbf{x}\sim p^{te}_X(\x)}\Big[\psi(f_{\textsf{ac}}(\mathbf{x}))+\sum\nolimits_{k=1}^K\psi(-f_{k}(\mathbf{x}))\Big].
\end{split}
\end{equation*}
\end{myProp}
Proposition~\ref{prop:LACU-equal-shift} implies that we can handle the augmented classes together with the distribution change on prior of known classes by empirically minimizing the risk $R_{LAC}^{shift}$. Note that since $R_{LAC}^{shift}$ further decomposes the distribution of known classes into several components, it enjoys more flexibility than $R_{LAC}$ in evaluating the testing risk, yet requires more efforts in estimation of class prior $\theta_{te}^k$ for each known class rather than mixture proportion $\theta$ only, which will be discussed next.

\section{Approach}
\label{sec:algorithm}
In this section, we develop two practical algorithms for the proposed \textsc{Eulac} approach to minimize the empirical version of the LAC risk $R_{LAC}$ (similar results can be extended for its generalization $R_{LAC}^{shift}$). Meanwhile, we discuss how to estimate the mixture proportion $\theta$ and class prior $\theta_{te}^k$.

\textbf{Kernel-based hypothesis space.} We first consider minimizing the empirical LAC risk $\hat{R}_{LAC}$ in the reproducing kernel Hilbert space (RKHS) $\mathbb{F}$ associated to a PDS kernel $\kappa:\mathcal{X}\times\mathcal{X}\mapsto\mathbb{R}$:
\begin{equation}
\label{eq:LACU-ERM}
\min_{f_1,\dots,f_K,f_{\textsf{ac}}\in\mathbb{F}} \hat{R}_{LAC}+\lambda\Big(\sum\nolimits_{k=1}^K \Vert f_k\Vert^2_{\mathbb{F}}+\Vert f_{\textsf{ac}}\Vert^2_{\mathbb{F}}\Big),
\end{equation}
where $\hat{R}_{LAC}$ is the empirical approximation of the LAC risk~\eqref{eq:expect-PU}  
\begin{equation}
\label{eq:LACU-empirical risk}
\hat{R}_{LAC} = \frac{\theta}{n_l}\sum\nolimits_{i=1}^{n_l}\left(f_{\textsf{ac}}(\mathbf{x}_i)- f_{y_i}(\mathbf{x}_i)\right) + \frac{1}{n_u}\sum\nolimits_{i=1}^{n_u} \Big(\psi(f_{\textsf{ac}}(\mathbf{x}_i))+\sum\nolimits_{k=1}^K\psi(-f_{k}(\mathbf{x}_i))\Big).
\end{equation}
According to the representer theorem~\citep{book'01:Kernel}, the optimal solution of ~\eqref{eq:LACU-ERM} is provably in the form of
\begin{equation}
\label{eq:close-form-f}
f_k(\cdot) = \sum\nolimits_{\x_i\in D_{L}}\alpha^k_i \kappa(\cdot,\x_i)+\sum\nolimits_{x_j\in D_{U}}\alpha^k_j \kappa(\cdot,\x_j),
\end{equation}
where $\alpha_i^k$ is the $i$-th coefficient of the $k$-th classifier. Plugging~\eqref{eq:close-form-f} into~\eqref{eq:LACU-ERM}, we get a convex optimization problem with respect to $\bm{\alpha}$, which can be solved efficiently. Since the risk estimator $\hat{R}_{LAC}$ is assessed on the testing distribution directly, we can perform \emph{unbiased} cross validation to select parameters. Then, after obtaining the binary classifiers $f_1,\ldots,f_K, f_{\textsf{ac}}$, we follow the OVR rule to construct the final predictor as $f:\X\mapsto\Y$ with $f(\x)=\argmax\nolimits_{k \in \{1,\dots,K,\textsf{ac}\}}f_k(\x)$.

\textbf{Deep model.} Our approach can be also implemented by deep neural networks. Since the deep models themselves are non-convex, we  directly minimize the non-convex formulation of $R_{LAC}$~\eqref{eq:expect-PU-nonconvex} by taking outputs of the deep model as OVR classifiers. However, as shown by~\citet{conf/nips/KiryoNPS17}, the direct minimization easily suffers from over-fitting as the risk is not bounded from below by 0. To avoid the undesired phenomenon, we apply their proposed non-negative risk~\citep{conf/nips/KiryoNPS17} to rectify the OVR scheme for training the deep model, whose effectiveness will be validated by experiments. More detailed elaborations for the rectified $R_{LAC}$ risk is presented in Appendix~\ref{sec-appendix:deep}.

\textbf{On the estimation of $\theta$.} Notice that minimizing $\hat{R}_{LAC}$ requires estimating $\theta$, which is known as the problem of \emph{Mixture Proportion Estimation} (MPE)~\citep{conf/icml/RamaswamyST16}, where one aims to estimate the maximum proportion of distribution $H$ in distribution $F$ given their empirical observations. Many works have been devoted to developing theoretical foundations and efficient algorithms~\citep{conf/kdd/ElkanN08,journals/jmlr/BlanchardLS10,journals/corr/JainWTR16,journals/ml/PlessisNS17}. We employ the kernel mean embedding (KME) based algorithm proposed by~\citet{conf/icml/RamaswamyST16}, which guarantees that the estimator $\hat{\theta}$ converges to true proportion $\theta$ in the rate of $\mathcal{O}(1/\sqrt{\min\{n_l,n_u\}})$ under the separability condition. Moreover, since the KME-based algorithm easily suffers from the curse of dimensionality in practice, inspired by the recent work~\citep{journals/corr/JainWTR16}, we further use a pre-trained model to reduce the dimensionality of original input to its probability outputs. We present more details of the proportion estimation in Appendix~\ref{sec-appendix:MPE}. We refer to the above estimator as KME-base, and the corresponding approach for LAC as \textsc{Eulac}-base.

Additionally, under the generalized class shift condition, we need more refined estimations for each known class. Therefore, we use the above MPE estimator to estimate each class prior $\theta_{te}^k$ in $R_{LAC}^{shift}$~\eqref{eq:expect-shift} via the labeled instances from the $k$-th known class and the unlabeled data, $k \in [K]$. We refer to such an estimator as KME-shift and the corresponding approach as \textsc{Eulac}-shift. Finally, we note that since the vanilla LAC can also be modeled with the generalized class shift condition, we can use KME-shift to estimate the mixture proportion $\hat{\theta}$ by  $\hat{\theta} = \sum_{k=1}^K\hat{\theta}_{te}^k$. It turns out that KME-shift achieves comparable (even better) empirical performance with KME-base.

\section{Theoretical Analysis}
\label{sec:theory}
In this section, we first show the infinite-sample \emph{consistency} of the LAC risk $R_{LAC}$. Then, we derive the \emph{generalization error} bounds. All the proofs can be found in Appendix~\ref{sec-appendix:proof-all}.

\textbf{Infinite-sample consistency.}
\label{sec:consistency}
At first, we show that the LAC risk $R_{LAC}$ is infinite-sample consistent with the risk over the testing distribution with respect to 0-1 loss. Namely, by minimizing the expected risk of $R_{LAC}$, we can get classifiers achieving the Bayes rule over the testing distribution.
\begin{myThm}
\label{thm:consistent}
Under the class shift condition, suppose the surrogate loss  $\psi$ is convex, bounded below, differential, satisfying $\psi(z)-\psi(-z) =-z$ and $\psi(z)<\psi(-z)\mbox{ when } z>0$, then for any $\epsilon_1>0$, there exists $\epsilon_2>0$ such that
\[
  R_{LAC}(f_1,\dots,f_K,f_{\textsf{ac}})\leq R_{LAC}^* + \epsilon_2   \quad \Longrightarrow  \quad  R(f)\leq R^* + \epsilon_1
\]
holds for all measurable functions $f_1,\dots,f_K,f_{\textsf{ac}}$ and $f(\x) = \argmax_{k\in \{1,\dots,K,\textsf{ac}\}}f_k(\x)$. Here, $R_{LAC}^* = \min\nolimits_{f_1,\dots,f_K,f_{\textsf{ac}}} R_{LAC}(f_1,\dots,f_K,f_{\textsf{ac}})$ and $R^* = \min_f R(f) = \mathbb{E}_{(\mathbf{x},y)\sim P_{te}}\ [\indicator(f(\x)\neq y)]$ is the Bayes error over the testing distribution.
\end{myThm}
Theorem~\ref{thm:consistent} follows from Proposition~\ref{prop:LACU-equal-nonconvex} and analysis in the seminal work of~\citet{journals/jmlr/Zhang04a}, who investigates the consistency property of OVR risk in depth. Since the LAC risk $R_{LAC}$ is equivalent to the OVR risk $R_{\psi}$, it is naturally infinite-sample consistent. There are many loss functions satisfy assumptions in Theorem~\ref{thm:consistent} such as the logistic loss $\psi(z) = \log(1+\exp(-z))$ and the square loss $\psi(z) = (1-z)^2/4$. In particular, we can obtain a more quantitative results for the square loss.
\begin{myThm}
\label{thm:consistent-square loss}
Under the same condition of Theorem~\ref{thm:consistent}, when using $\psi(z) = (1-z)^2/4$ as the surrogate loss function, we have $R(f)-R^* \leq \sqrt{2\big(R_{LAC}(f_1,\dots,f_K,f_{\textsf{ac}})-R^*_{LAC}\big)}$.
\end{myThm}
Theorem~\ref{thm:consistent-square loss} shows that the excess risk of $R_{LAC}$ upper bounds that of 0-1 loss. Thus, by minimizing the LAC risk $R_{LAC}$, we can obtain well-behaved classifiers on the testing distribution w.r.t. 0-1 loss.

\begin{myRemark}
Theorems~\ref{thm:consistent} and~\ref{thm:consistent-square loss} show the consistency for $R_{LAC}$ under class shift condition. Similar results can be easily obtained for $R_{LAC}^{shift}$ with the generalized class shift condition, due to the equivalence of $R_{LAC}^{shift}$ and the OVR risk, even when prior of known classes have changed.
\end{myRemark}

\textbf{Finite-sample generalization error bound.}
\label{sec:excess-risk-bound}
We establish the generalization error bound for the proposed approach in this part. Since the approach actually minimizes the empirical risk estimator $\hat{R}_{LAC}$ with a regularization term of the RKHS $\mathbb{F}$, it is equivalent to investigate the generalization ability of classifiers $f_1,\dots,f_K,f_{\textsf{ac}}$ in the kernel-based hypothesis set $\mathcal{F} = \{\mathbf{x}\mapsto\langle\mathbf{w},\Phi(\mathbf{x})\rangle\ \vert\ \Vert\mathbf{w}\Vert_{\mathbb{F}}\leq\Lambda  \}$, where $\Phi:\mathbf{x}\mapsto\mathbb{F}$ is a feature mapping associated with the positive definite symmetric kernel $\kappa$, and $\mathbf{w}$ is an element in the RKHS $\mathbb{F}$. We have the following generalization error bound.
\begin{myThm}
\label{thm:generalization-bound}
Assume that $\kappa(\x,\x) \leq r^2$ holds for all $\x\in\mathcal{X}$ and the surrogate loss function $\psi$ is bounded by $B_\psi \geq 0$ and is $L$-Lipschitz continuous.\footnote{Common surrogate loss functions  satisfy these conditions, such as logistic loss, exp loss and square loss.} Then, for any $\delta>0$, with probability at least $1-\delta$ over the draw of labeled samples $D_{L}$ of size $n_l$ from the distribution of known classes $P_\textsf{kc}$ and unlabeled samples $D_U$ of size $n_u$ from $p_{X}^{te}(\x)$, the following holds for all $f_1,\dots,f_K,f_{\textsf{ac}}\in\mathcal{F}$,
\begin{align*}
         {} & R_{LAC}(f_1,\dots,f_K,f_{\textsf{ac}}) - \hat{R}_{LAC}(f_1,\dots,f_K,f_{\textsf{ac}})\\ 
    \leq {} & \frac{2(K+1)\Lambda r}{\sqrt{n_l}}+6\Lambda r\sqrt{\frac{2\log(4/\delta)}{n_l}} + \frac{2(K+1)L\Lambda r}{\sqrt{n_u}}+3(K+1)B_\psi\sqrt{\frac{\log(4/\delta)}{n_u}}.
\end{align*}
\end{myThm}
Based on Theorem~\ref{thm:generalization-bound}, by the standard argument~\citep{MLSS'03:STL,book'12:foundation}, we can obtain the estimation error bound. 
\begin{myThm}
    \label{thm:estimation-error}
    Under the same assumptions of Theorem~\ref{thm:generalization-bound} and let $\hat{f}_1,\dots,\hat{f}_K,\hat{f}_{\textsf{ac}}$ be the optimal solution of the optimization problem~\eqref{eq:LACU-ERM} with certain $\lambda>0$, with high probability, we have
    \begin{equation*}
        R_{LAC}(\hat{f}_1,\dots,\hat{f}_K,\hat{f}_{\textsf{ac}})- \inf_{\bm{f}\in\mathscr{F}} R_{LAC}(f_1,\dots,f_K,f_{\textsf{ac}}) \leq \mathcal{O}\left(\frac{K+1}{\sqrt{n_l}}+\frac{K+1}{\sqrt{n_u}}\right), 
    \end{equation*}
    where $\bm{f}$ denotes $(f_1,\dots,f_K,f_{\textsf{ac}})$ and $\mathscr{F}=\{\bm{f}\ \vert\  f_1,\dots,f_K,f_{\textsf{ac}}\in\mathbb{F},\sum_{k=1}^K\Vert f_k\Vert^2_\mathbb{F}+\Vert f_{\textsf{ac}}\Vert^2_\mathbb{F}\leq c_\lambda^2\}$. The parameter $c_\lambda>0$ is a constant related to $\lambda$ in~\eqref{eq:LACU-ERM}. We use the $\mathcal{O}$-notation to keep the dependence on $n_u$, $n_l$ and $K$ only, where the full expression can be found in Appendix~\ref{sec-appendix:proof-FSC}.
\end{myThm}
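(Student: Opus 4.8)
The plan is to run the standard ``uniform convergence over a bounded hypothesis set plus ERM optimality'' argument, preceded by a short step that recasts the regularized problem~\eqref{eq:LACU-ERM} as a norm-constrained one so that Theorem~\ref{thm:generalization-bound} becomes directly applicable. \emph{Step 1: an a priori norm bound on the solution.} First I would show that the minimizer $\hat{\bm f}=(\hat f_1,\dots,\hat f_K,\hat f_{nc})$ of~\eqref{eq:LACU-ERM} automatically lies in a bounded RKHS ball. Testing the objective of~\eqref{eq:LACU-ERM} at the all-zero tuple gives $\hat{R}_{LAC}(\hat{\bm f})+\lambda\Vert\hat{\bm f}\Vert_{\mathscr F}^2\le (K+1)\psi(0)$, where $\Vert\hat{\bm f}\Vert_{\mathscr F}^2:=\sum_{k=1}^{K}\Vert\hat f_k\Vert_{\mathbb F}^2+\Vert\hat f_{nc}\Vert_{\mathbb F}^2$. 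Since $\psi\ge 0$, the empirical-average part of $\hat{R}_{LAC}$ is nonnegative, and by the reproducing property $\vert\hat f_k(\x)\vert=\vert\langle\w_k,\Phi(\x)\rangle\vert\le r\Vert\hat f_k\Vert_{\mathbb F}$ the linear part is at least $-\sqrt2\,\theta r\,\Vert\hat{\bm f}\Vert_{\mathscr F}$; combining these yields the quadratic inequality $\lambda\Vert\hat{\bm f}\Vert_{\mathscr F}^2-\sqrt2\,\theta r\,\Vert\hat{\bm f}\Vert_{\mathscr F}-(K+1)\psi(0)\le 0$, whose solution gives $\Vert\hat{\bm f}\Vert_{\mathscr F}\le c_\lambda$ for an explicit constant $c_\lambda$ depending only on $\lambda,K,r,\theta$ and $\psi(0)$. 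This fixes the radius of $\mathscr F$; in particular each component lies in the kernel ball $\mathcal F$ with $\Lambda=c_\lambda$, and by the Tikhonov--constraint (Lagrangian) correspondence $\hat{\bm f}$ is also a minimizer of $\hat{R}_{LAC}$ over $\mathscr F$.

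\emph{Step 2: error decomposition and term-by-term control.} Let $\bm f^\star\in\mathscr F$ attain $\inf_{\bm f\in\mathscr F}R_{LAC}(\bm f)$ (up to an arbitrary $\eta>0$). I would then write
\[
R_{LAC}(\hat{\bm f})-R_{LAC}(\bm f^\star)=\big[R_{LAC}(\hat{\bm f})-\hat{R}_{LAC}(\hat{\bm f})\big]+\big[\hat{R}_{LAC}(\hat{\bm f})-\hat{R}_{LAC}(\bm f^\star)\big]+\big[\hat{R}_{LAC}(\bm f^\star)-R_{LAC}(\bm f^\star)\big].
\]
The middle bracket is $\le 0$ by the constrained optimality of $\hat{\bm f}$ from Step 1. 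The first bracket is at most, with probability at least $1-\delta$, the right-hand side of Theorem~\ref{thm:generalization-bound} evaluated at $\Lambda=c_\lambda$, which is $\mathcal O_p\big((K+1)/\sqrt{n_l}+(K+1)/\sqrt{n_u}\big)$ once $c_\lambda,r,L,B_\psi$ are held fixed. The last bracket only involves the single tuple $\bm f^\star$, so it is controlled directly by a one-sided concentration inequality (Hoeffding/McDiarmid on the two independent empirical averages over $D_L$ and $D_U$, using $\psi\le B_\psi$ and $\vert f^\star_k(\x)\vert\le c_\lambda r$), giving a bound of order $\mathcal O_p(1/\sqrt{n_l}+1/\sqrt{n_u})$ with probability at least $1-\delta$; alternatively one simply reuses the (two-sided) Rademacher argument that underlies Theorem~\ref{thm:generalization-bound}.

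\emph{Step 3: assembling, and the main obstacle.} Taking a union bound over the two high-probability events (replace $\delta$ by $\delta/2$) and letting $\eta\downarrow 0$ yields $R_{LAC}(\hat{\bm f})-\inf_{\bm f\in\mathscr F}R_{LAC}(\bm f)\le\mathcal O_p\big((K+1)/\sqrt{n_l}+(K+1)/\sqrt{n_u}\big)$, with the dependence on $\lambda,r,\Lambda,L,B_\psi,\theta$ absorbed into $\mathcal O_p$, which is the claim. The decomposition itself and the concentration for the fixed comparator are routine; the step I expect to be the genuine obstacle is Step~1 --- establishing the a priori norm bound $c_\lambda$ for the \emph{regularized} solution and making the regularization-to-constraint passage precise, so that Theorem~\ref{thm:generalization-bound} can legitimately be invoked with $\Lambda=c_\lambda$ and the middle bracket is guaranteed non-positive. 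Exactly that lower bound on the linear part of $\hat{R}_{LAC}$ is where the bounded-kernel assumption $K(\x,\x)\le r^2$ and the nonnegativity of $\psi$ are used.
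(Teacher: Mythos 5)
Your proposal is correct and follows essentially the same route as the paper's proof: pass from the regularized problem~\eqref{eq:LACU-ERM} to the norm-constrained problem over $\mathscr{F}$, use the standard ERM decomposition with the middle term killed by optimality of $\hat{\bm f}$, and control the remaining deviations uniformly via Theorem~\ref{thm:generalization-bound} with $\Lambda=c_\lambda$. The only differences are cosmetic and in your favor: you derive the a priori radius $c_\lambda$ explicitly by testing the objective at zero (the paper simply invokes the regularization-path equivalence), and you handle the fixed comparator by pointwise concentration where the paper bounds both brackets at once by $2\sup_{\bm f\in\mathscr{F}}\abs{R_{LAC}(\bm f)-\hat{R}_{LAC}(\bm f)}$.
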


\begin{myRemark}
Theorem~\ref{thm:generalization-bound} and Theorem~\ref{thm:estimation-error} show that, the estimation error of the trained classifiers decreases with a growing number of labeled and \emph{unlabeled} data, which theoretically justifies the effecacy of our approach in exploiting unlabeled data. Experiments also validate the same tendency.
\end{myRemark}
\textbf{Overview of theoretical results.} Recall that the goal of the LAC problem is to obtain classifiers that approach Bayes rule over the testing distribution, so we need to minimize the excess risk $R\big(\mathop{\mathrm{argmax}}\nolimits_{k\in \{1,\dots,K,\textsf{ac}\}}f_k\big)-R^*$. According to the consistency guarantee presented in Theorem~\ref{thm:consistent}, it suffices to minimize the excess risk $R_{LAC}(\bm{f})-R_{LAC}^*$, which can be further decomposed into the estimation error and the approximation error as follows,
\begin{align*}
    R_{LAC}(\bm{f})-R_{LAC}^* = \underbrace{R_{LAC}(\bm{f}) - \inf\nolimits_{\bm{f}\in\mathscr{F}}R_{LAC}(\bm{f})}_{\mathtt{estimation~error}} + \underbrace{\inf\nolimits_{\bm{f}\in\mathscr{F}}R_{LAC}(\bm{f}) - R_{LAC}^*}_{\mathtt{approximation~error}}.
\end{align*}
Theorem~\ref{thm:estimation-error} shows that with an increasing number of labeled and unlabeled data, the excess risk converges to the irreducible approximation error, which measures how well the hypothesis set approximates the Bayes rule and is generally not accessible for learning algorithms~\citep{book'12:foundation}. Thus, the consistency and excess risk bounds theoretically justify the effectiveness of our approach.

\section{Related Work and Discussion}
\label{sec:related-work}
This section discusses several research topics and techniques that are related to our approach.

\textbf{Class-incremental learning}~\citep{journals/kbs/ZhouC02} aims to handle new classes appearing in the learning process, and learning with augmented classes is one of its core tasks. Some early studies~\citep{conf/aaai/DaYZ14,conf/icdm/DingLZ18} try to exploit unlabeled data for handling the LAC problem. Our approach differs from theirs as we depict the connection between known and augmented classes by the class shift condition rather than the geometric assumption, which leads to more clear theoretical understandings and better performance. Apart from the batch setting, researchers also manage to handle even more challenging scenario where augmented classes emerge in the streaming data~\citep{conf/icml/FinkSSU06,journals/tnn/MuhlbaierTP09,journals/tkde/MuTZ17,ICDM'19:SENNE}. It is interesting to study that whether our approach can be tailored for the streaming setting.

\textbf{Open set recognition} \citep{Scheirer_2013_TPAMI,arxiv:OSR-survey} is a cousin of the LAC problem studies in the computer vision and pattern recognition communities. As we have mentioned, several techniques or concepts are employed to depict the relationship between known and augmented classes, including open space risk~\citep{Scheirer_2013_TPAMI,journals/pami/ScheirerJB14}, nearest neighbor approach~\citep{journals/ml/Mendes-JuniorSW17}, extreme value theory~\citep{journals/pami/RuddJSB18} and the adversarial sample generation framework~\citep{conf/ijcai/YuQLG17}, etc. We note that many works in OSR implicitly use the feature semantic information to help identifying augmented classes. By contrast, our paper works on a general setting without such domain knowledge on the semantic information.

Although the approaches achieve nice empirical behavior and are underpinned by formal definitions or theories, their generalization error over testing distribution are less explored. Exceptions are works~\citep{journals/jmlr/ScottB09,journals/jmlr/BlanchardLS10,conf/icml/LiuGDFH18}. Authors focus on the Neyman-Pearson~(NP) classification problem, where false positive ratio on known classes are minimized with the constraint on desired novelty detection ratio, or vice.~\citet{journals/jmlr/ScottB09} and~\citet{journals/jmlr/BlanchardLS10} provide one-side generalization bounds for both the novelty detection ratio and false positive ratio. However, the results mainly focus on the binary NP classification problem. The generalization error and excess risk analysis for the LAC problem, where multiple classes appear, is not investigated.~\citet{conf/icml/LiuGDFH18} design a general meta-algorithm that can take any existing novelty detection approach as a subroutine to recognize augmented classes. They contribute to the PAC-style guarantee for the meta-algorithm on the novelty detection ratio, while performance on the false positive rate is less explored.

\textbf{Learning with positive and unlabeled examples}~(LPUE), also known as PU learning, is a special semi-supervised learning task aiming to train a classifier for the binary classification with the positive and unlabeled data only~\citep{conf/icml/LiuLYL02,conf/kdd/ElkanN08,journals/biometrics/Ward09,conf/nips/PlessisNS14,conf/icml/PlessisNS15}. One research line of LPUE is to exploit the risk rewriting technique to establish unbiased estimators for classifier training, which have also been adopted in our paper. The LAC problem with unlabeled data can be seen as a generalized LPUE problem by taking the known classes as positive. However, most studies on LPUE mainly focus on the binary scenario and established approaches are no longer unbiased in the multiclass case. For the multiclass scenario,~\citet{conf/ijcai/XuX0T17} exploit the risk rewriting technique to train linear classifiers, which has also been adopted by~\citet{journals/corr/abs-1901-11351} for ordinal regression. Although sharing similarity with~\citep{conf/ijcai/XuX0T17,journals/corr/abs-1901-11351}, our LAC risk is established in a quite different context and brings novel understandings for the LAC, through which more complex changing environments could be handled. Besides, the LAC risk allows more flexible implementations where the kernel method and deep model are applicable.

\section{Experiments}
\label{sec:expr}
We examine three aspects of the proposed \textsc{Eulac} approach: (\textsf{Q1}) performance of classifying known classes and identifying augmented classes; (\textsf{Q2}) accuracy of estimating mixture prior $\theta$ and its influence on \textsc{Eulac}; (\textsf{Q3}) capability of handling the complex changing environments (augmented class appears and prior of known classes shifts simultaneously). We answer the questions in following three subsections. In all experiments, classifiers are trained with labeled and unlabeled data, and are evaluated with an additional testing dataset which is never observed in training.

\begin{table}[!t]
\centering
\setcaptionwidth{0.94\textwidth}
\caption{\small{Macro-F1 scores on benchmark datasets. The best method is emphasized in bold. Besides, $\bullet$ indicates that \textsc{Eulac} is significantly better than others (paired $t$-tests at 5\% significance level).}}
\label{table:experiment1-MacaroF1}
\vspace{-1mm}
\resizebox{0.94\textwidth}{!}{
\begin{tabular}{lllllllll}
\toprule
\multirow{1}{*}{Dataset} & \multicolumn{1}{c}{\multirow{1}{*}{OVR-SVM}} & \multicolumn{1}{c}{\multirow{1}{*}{W-SVM}} & \multicolumn{1}{c}{\multirow{1}{*}{OSNN}} & \multicolumn{1}{c}{\multirow{1}{*}{EVM}} & \multicolumn{1}{c}{\multirow{1}{*}{LACU-SVM}} & \multicolumn{1}{c}{PAC-iForest}      & \multicolumn{1}{c}{\multirow{1}{*}{\textsc{Eulac}}} \\ \midrule
usps                                        & 75.42 $\pm$ 4.87 $\bullet$         & 79.77 $\pm$ 4.97 $\bullet$       & 63.14 $\pm$ 8.91 $\bullet$      & 61.14 $\pm$ 6.27 $\bullet$     & 69.20 $\pm$ 8.34 $\bullet$           & 55.69 $\pm$ 13.3 $\bullet$     & \textbf{86.52 $\pm$ 2.72}       \\
segment                                      & 71.78 $\pm$ 5.12 $\bullet$         & 80.82 $\pm$ 9.38 $\bullet$       & 85.10 $\pm$ 5.98                & 82.13 $\pm$ 5.88 $\bullet$     & 40.69 $\pm$ 12.5 $\bullet$          & 63.64 $\pm$ 13.1 $\bullet$     & \textbf{86.17 $\pm$ 5.80}       \\
satimage                                     & 54.67 $\pm$ 9.80 $\bullet$         & 76.29 $\pm$ 13.2 $\bullet$       & 62.48 $\pm$ 11.2 $\bullet$      & 72.10 $\pm$ 8.16 $\bullet$     & 51.56 $\pm$ 17.3 $\bullet$          & 60.76 $\pm$ 7.79 $\bullet$     & \textbf{81.25 $\pm$ 6.18}       \\
optdigits                                    & 80.11 $\pm$ 3.80 $\bullet$         & 87.82 $\pm$ 4.64 $\bullet$       & 86.97 $\pm$ 3.79 $\bullet$      & 72.00 $\pm$ 8.33 $\bullet$     & 80.92 $\pm$ 3.68 $\bullet$          & 71.65 $\pm$ 5.46 $\bullet$     & \textbf{91.54 $\pm$ 2.95}       \\
pendigits                                    & 72.78 $\pm$ 5.19 $\bullet$         & 87.79 $\pm$ 3.95                 & 86.69 $\pm$ 3.39 $\bullet$      & \textbf{89.94 $\pm$ 1.30}     & 70.66 $\pm$ 6.18 $\bullet$           & 73.21 $\pm$ 4.52 $\bullet$     & 88.41 $\pm$ 4.81                 \\
SenseVeh                                     & 48.07 $\pm$ 3.80 $\bullet$         & 45.96 $\pm$ 2.32 $\bullet$       & 49.91 $\pm$ 6.88 $\bullet$      & 51.24 $\pm$ 3.91 $\bullet$     & 51.61 $\pm$ 3.31 $\bullet$          & 54.12 $\pm$ 7.19 $\bullet$     & \textbf{77.33 $\pm$ 2.17}       \\
landset                                      & 60.43 $\pm$ 7.65 $\bullet$         & 68.91 $\pm$ 17.0 $\bullet$       & 73.25 $\pm$ 9.23 $\bullet$      & 76.00 $\pm$ 7.79 $\bullet$     & 53.59 $\pm$ 9.88 $\bullet$          & 70.50 $\pm$ 7.16 $\bullet$     & \textbf{85.70 $\pm$ 4.46}       \\
mnist                                        & 66.74 $\pm$ 2.76 $\bullet$         & 75.38 $\pm$ 4.62 $\bullet$       & 57.75 $\pm$ 10.9 $\bullet$      &  58.39 $\pm$ 5.94 $\bullet$       & 63.53 $\pm$ 7.58 $\bullet$       & 48.31 $\pm$ 9.62 $\bullet$     & \textbf{80.66 $\pm$ 5.38}       \\
shuttle                                      & 37.39 $\pm$ 14.1 $\bullet$         & 58.48 $\pm$ 34.5 $\bullet$       & 48.21 $\pm$ 16.4 $\bullet$      &  \multicolumn{1}{c}{--}         & 34.18 $\pm$ 13.4 $\bullet$         & 29.36 $\pm$ 8.70 $\bullet$     & \textbf{66.49 $\pm$ 17.9}       \\ \midrule
\textsc{Eulac} w/ t/ l                                & \multicolumn{1}{c}{9/ 0/ 0}        & \multicolumn{1}{c}{8/ 1/ 0}      & \multicolumn{1}{c}{8/ 1/ 0}     & \multicolumn{1}{c}{8/ 1/ 0}       & \multicolumn{1}{c}{9/ 0/ 0}     & \multicolumn{1}{c}{9/ 0/ 0} & rank first 8/ 9       \\ 
\bottomrule
\end{tabular}
}
\vspace{-3mm}
\end{table}

\subsection{Performance Comparison} 
\label{sec:5.1}
To answer \textsf{Q1}, we compare two implementations of \textsc{Eulac} (RKHS-based and DNN-based versions) with contenders on several benchmark datasets for various tasks. The overall performance over testing distribution is measured by the Macro-F1 score and accuracy. Meanwhile, we report AUC of augmented class score to evaluate the ability of identifying augmented classes. Due to space constraints, we provide detailed descriptions of datasets, contenders and measures in Appendix~\ref{sec-appendix:experiments-1}.

\textbf{Comparison on RKHS-based Eulac.} We adopt 9 datasets, where half of the total classes are randomly selected as augmented classes for 10 times. In each dataset, the labeled, unlabeled and testing data contain 500, 1000 and 1000 instance respectively. The instance sampling procedure repeats 10 times. Meanwhile, there are six contenders, including four without exploiting unlabeled data (OVR-SVM, W-SVM~\citep{journals/pami/ScheirerJB14}, OSNN~\citep{journals/ml/Mendes-JuniorSW17}, EVM~\citep{journals/pami/RuddJSB18}) and two using them (LACU-SVM~\citep{conf/aaai/DaYZ14}, PAC-iForest~\citep{conf/icdm/LiuTZ08}). Table~\ref{table:experiment1-MacaroF1} reports performance in terms of Macro-F1 score. Similar results for accuracy and AUC are shown in Appendix~\ref{sec-appendix:experiments-1}. We can see that \textsc{Eulac} outperforms others in most datasets. Note that it is surprising that W-SVM and EVM achieve better results than LACU-SVM and PAC-iForest, which are fed with unlabeled data. The reason might be that these methods require to set parameters empirically and the default one may not be proper for all datasets. By contrast, our proposed \textsc{Eulac} can perform \emph{unbiased} cross validation to select proper parameters. 

\textbf{Influence on the size of unlabeled data.} We vary the size of unlabeled data from 250 to 1500 with an interval of 250 on 3 datasets: mnist, landset, and usps. LACU-SVM and PAC-iForest are included for comparison. Figure~\ref{figure:exp2-unlabeled} presents the Macro-F1 score and shows that the score of LACU-SVM remains unchanged or even drops in the three datasets, while performance of our approach is enhanced when provided with more unlabeled data, which is consistent with theoretical analysis in Section~\ref{sec:theory}. This again validates that our approach can exploit unlabeled data effectively. Notice that PAC-iForest also enjoys sound theoretical guarantees, yet the guarantees only hold for the novelty detection ratio and thus the overall performance on the testing distribution is not promised to be improved.

\begin{figure}[!t]
    \begin{minipage}[t]{\linewidth}
    \centering
 \begin{subfigure}[\textsf{mnist}]{
    \begin{minipage}[b]{0.3\textwidth}
    \label{figure:mnist-F1}
     \includegraphics[clip, width=\textwidth]{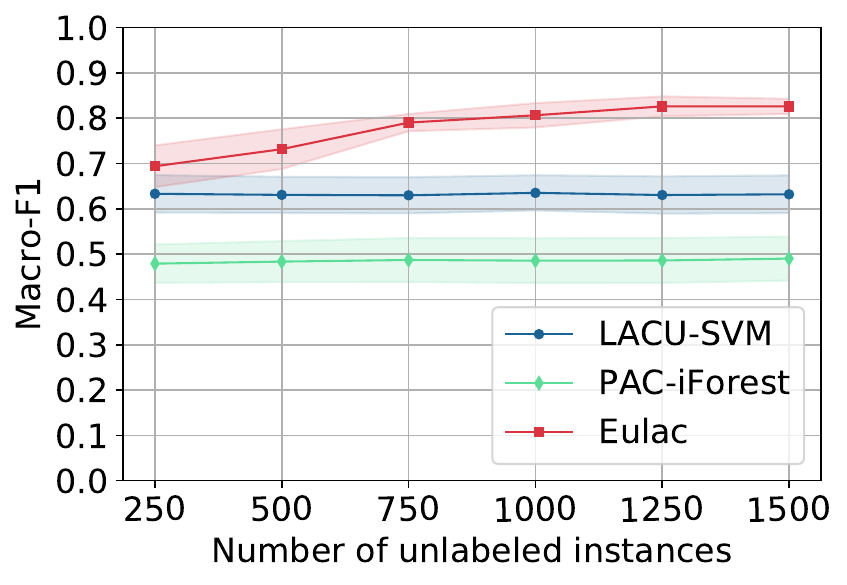} 
     \end{minipage}}
  \end{subfigure}
   \begin{subfigure}[\textsf{landset}]{
     \begin{minipage}[b]{0.3\textwidth}
     \label{figure:landset-F1}
      \includegraphics[clip, width=\textwidth]{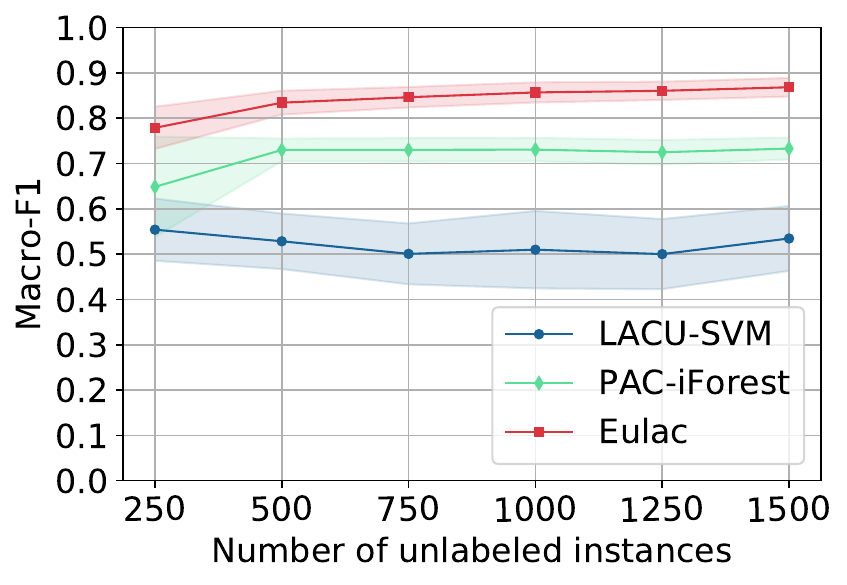}
      \end{minipage}}
   \end{subfigure}
  \begin{subfigure}[\textsf{usps}]{
    \begin{minipage}[b]{0.3\textwidth}
    \label{fig:propa}
     \includegraphics[clip, width=\textwidth]{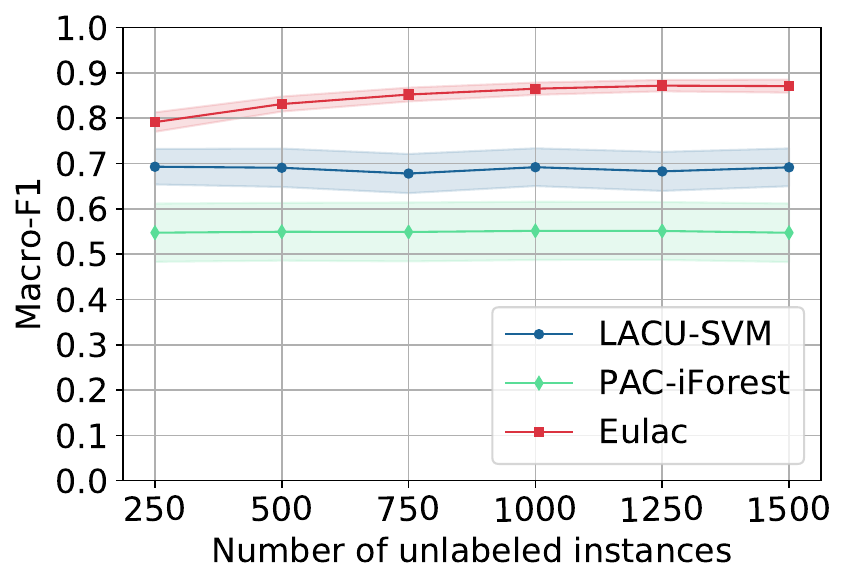}
     \end{minipage}}
  \end{subfigure}
  \vspace{-3mm}
  \caption{Macro-F1 score comparisons when the number of unlabeled data increases.}
  \label{figure:exp2-unlabeled}
  \vspace{-5mm}
  \end{minipage}  
\end{figure}

\begin{wraptable}{r}{6cm}
\vspace{-6mm}
  \centering
  \caption{AUC for DNN-based \textsc{Eulac}}
  \vspace{2mm}
           \label{table:AUC-deep}
     \resizebox{0.4\textwidth}{!}{
    \begin{tabular}{lccc}
    \toprule
    Methods     & mnist            &    Cifar-10  &    SVHN   \\ \midrule
    SoftMax     &  97.8 $\pm$ 0.6        &    67.7 $\pm$ 3.8     &    88.6 $\pm$ 1.4   \\
    OpenMax     &  98.1 $\pm$ 0.5         &    69.5 $\pm$ 4.4    &    89.4 $\pm$ 1.3  \\
    G-OpenMax   &  98.4 $\pm$ 0.5        &    67.5 $\pm$ 4.4    &    89.6 $\pm$ 1.7  \\
    OSRCI       &   \textbf{98.8 $\pm$ 0.4}         &    69.9 $\pm$ 3.8    &    \textbf{91.0 $\pm$ 1.0}  \\
    \textsc{Eulac}      &   98.6 $\pm$ 0.4                &   \textbf{85.2 $\pm$ 2.0}          &    \textbf{91.2 $\pm$ 2.8}       \\ \bottomrule
    \end{tabular}}
      \vspace{-3mm}
\end{wraptable}

\textbf{Comparison on deep models.} We also evaluate DNN-based \textsc{Eulac}, where the sigmoid loss $\psi(z) = 1/(1+\exp(z))$ is used for the non-negative risk. The experiments are conducted on mnist, SVHN and Cifar-10 datasets, where six of all ten classes are randomly selected as known while the rest four are treated as augmented. The contenders are SoftMax, OpenMax~\citep{conf/cvpr/BendaleB16}, G-OpenMax~\citep{conf/bmvc/GeDG17}, OSRCI~\citep{conf/eccv/NealOFWL18}. All methods are trained based on the standard training split. The unlabeled data are sampled from part of the standard testing split and the rest instances are used for evaluation. We present more details in Appendix~\ref{sec-appendix:experiments-2}. Following the previous study~\citep{conf/eccv/NealOFWL18}, we report AUC of the augmented class in Table~\ref{table:AUC-deep}, and results of contenders are also from~\citep{conf/eccv/NealOFWL18}. DNN-based \textsc{Eulac} can learn nice detection score for identifying augmented classes, which validates its efficacy.

\subsection{Issue of Mixture Proportion}
To answer \textsf{Q2}, we conduct experiments on mnist dataset, where the true mixture proportion varies from 0.1 to 0.9. Other configurations are the same as those in Section~\ref{sec:5.1}.

\begin{wrapfigure}{r}{5.4cm}
\vspace{-4mm}
\centering
\includegraphics[width = 0.37\textwidth]{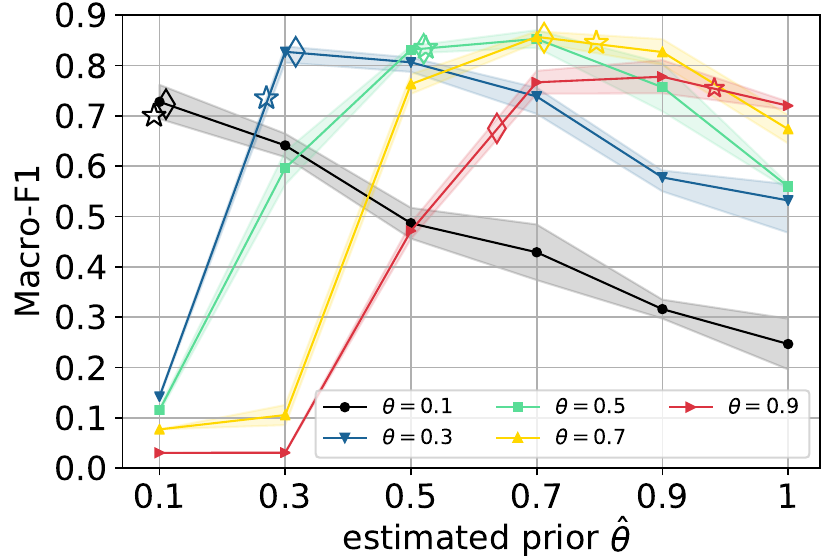}
\setcaptionwidth{5.2cm}
\vspace{-3mm}
\caption{\small{Influence and estimation accuracy of mixture proportion $\theta$.}}
\label{fig:influence} 
\end{wrapfigure}
\textbf{Influence and accuracy on the estimation of $\bf{\theta}$.}  Figure~\ref{fig:influence} plots the sensitivity curve, where the estimated prior $\hat{\theta}$ varies from 0.1 to 0.9 under different ground-truth mixture proportions $\theta$. We observe that a misspecified mixture proportion will clearly lead to performance degeneration. Interestingly, the degeneration is not isotropy---a larger misspecified value would be much more benign than a smaller one. We mark averaged estimated values of KME-base ($\blacklozenge$) and KME-shift ($\bigstar$). Evidently, the estimator gives high-quality estimated prior $\hat{\theta}$, close to the ground-truth value, which prevents our approach from performance degeneration.

\subsection{Handling Complex Changing Environments}
To answer \textsf{Q3}, we compare our approach with several baselines when augmented classes appear and prior of known classes shifts simultaneously. The experiments are simulated on mnist dataset, where classes $\{1, 3, 5, 7, 9\}$ are known and share the equal prior in labeled data. $\{2,4,6,8,10\}$ are taken as the augmented classes and account for 50$\%$ in unlabeled data. As for the prior shift in the testing distribution, we scale the the prior of five known classes to $[1-\alpha,1-\alpha/2,1,1+\alpha/2,1+\alpha]\times0.2$ respectively, where parameter $\alpha$ controls shift intensity ranging from $0$ to $0.7$.

\textbf{Contenders.} Contenders include LACU-SVM, OVR-shift and three variants of \textsc{Eulac} (\textsc{Eulac}-base, \textsc{Eulac}-base++ and \textsc{Eulac}-shift), where LACU-SVM and \textsc{Eulac}-base do not consider the shift on known classes' prior, while OVR-shift and \textsc{Eulac}-base++ take it into account but are biased. \textsc{Eulac}-shift is the unbiased estimator. For all approaches, class prior $\theta_{te}^k$ is estimated by KME-shift. Detailed descriptions of contenders can be found in Appendix~\ref{sec-appendix:experiment-3}.

\begin{wrapfigure}{r}{5.4cm}
\vspace{-4mm}
\centering
\includegraphics[width = 0.37\textwidth]{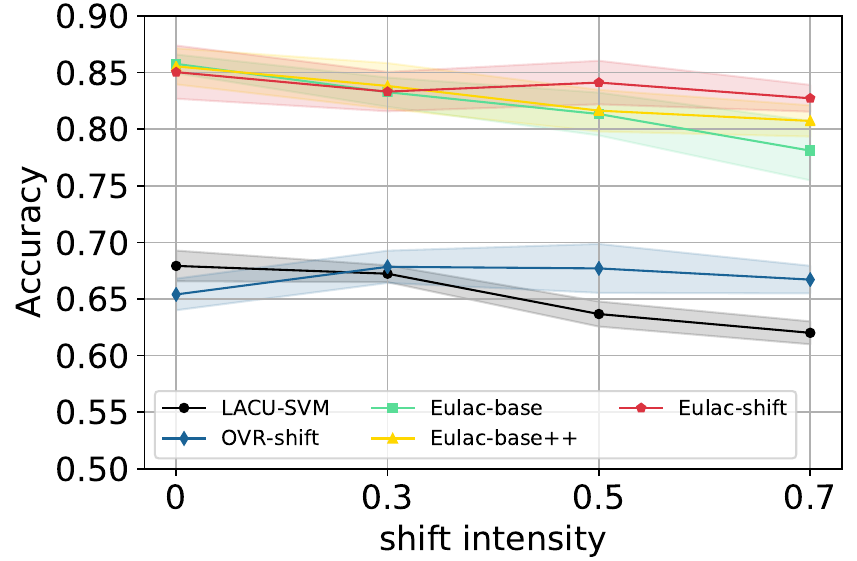}
\vspace{-3mm}
\setcaptionwidth{5.4cm}
\caption{\small{Comparison in complex environments (augmented classes \& prior shift).}}
\label{fig:target-shift}
\end{wrapfigure}

\textbf{Results.} Since Macro-F1 is an insensitive measure for the prior shift scenario, we report the accuracy for contenders in Figure~\ref{fig:target-shift}. First, with the increase of shift intensity, methods without considering prior shift (LACU-SVM, \textsc{Eulac}-base) suffer from marked performance degeneration, which shows the importance for handling distribution change of known classes with augmented classes. Besides, \textsc{Eulac}-shift achieves the best accuracy with high shift intensity and retains comparable performance with its baselines when there is no prior shift. The results validate the efficacy and safety of our proposal in complex environments. 

\section{Conclusion}
In this paper, we investigate the problem of learning with unobserved augmented classes by exploiting unlabeled training data. We introduce the \emph{class shift condition} to connect known and augmented classes, based on which an unbiased risk estimator can be established. By empirically minimizing the risk estimator with various hypothesis sets, we design the \textsc{Eulac} approach, supported by both consistency and generalization error analysis. Moreover, with the generalized class shift condition, we show the potential of our approach for handling a more general setting of class distribution change, where augmented classes appear and the prior of known classes shifts simultaneously. Extensive empirical studies confirm the effectiveness of the proposed approach. In the future, we will investigate whether our approach can be tailored for the streaming setting. Besides, it is also interesting to consider even more general scenarios of class distribution change than the problem settings studied in this paper, in order to handle more realistic changing environments. 

\newpage
\section*{Acknowledgements}
This research was supported by the NSFC (61751306, 61921006) and the Collaborative Innovation Center of Novel Software Technology and Industrialization. Meanwhile, the authors want to thank Yu-Hu Yan for reading the draft and the anonymous reviewers for the helpful and insightful comments.

\section*{Broader Impact}
In this paper, we develop the \textsc{Eulac}, an approach exploiting unlabeled data for learning with augmented classes. The augmented classes appear in many applications, such as unobserved animals appear in species recognition task~\citep{journals/aim/Dietterich17} and unexpected background images exist in object detection~\citep{Scheirer_2013_TPAMI}. Our approach offers a way to improve the robustness of the learning system for these applications by identifying the unseen augmented classes more accurately. Nevertheless, we also admit it would raise concerns when applying these techniques to some malicious applications. For example, one could employ ML systems to detect rare animals, resulting in an increased probability of rare animals being hunted and thus making the animals even more dangerous. Therefore, we should call for laws and regulations to limits the use of ML techniques in such applications.

On the other hand, it is also crucial to facilitate learning systems with the capability of tackling the augmented classes. Many applications require such robustness and will benefit from our techniques, and the potential risk is believed to be manageable with more sound human regulations.

\bibliography{new_class}
\bibliographystyle{unsrtnat}

\clearpage
\appendix
\begin{center}
  {\bf \LARGE Supplementary Material for ``An Unbiased Risk Estimator for Learning with Augmented Classes''}
\end{center}

\appendix

This is the supplemental material for the paper ``An Unbiased Risk Estimator for Learning with Augmented Classes''. The appendix is organized as follows.
\begin{itemize}
\item Appendix~\ref{sec-appendix:algorithm}: supplementary descriptions for our approach, including the non-negative LAC risk for the deep model training and the introduction to the mixture proportion estimation.
\begin{itemize}
	\item Appendix~\ref{sec-appendix:deep}: the non-negative risk used for the deep model training.
	\item Appendix~\ref{sec-appendix:MPE}: introduction for the mixture proportion estimation (MPE) problem, including the problem formulation, the KME-based approach with its theoretical analysis and implementation with dimensionality reduction.
\end{itemize}

\item Appendix~\ref{sec-appendix:proof-all}: proofs including the technical lemmas for the results in main paper.
\begin{itemize}
  \item Appendix~\ref{sec-appendix:proof-prop1}: proof of Proposition~\ref{prop:LACU-equal-nonconvex}, the equality of the LAC risk and the OVR risk over testing distribution.
  \item Appendix~\ref{sec-appendix:proof-ISC}: proofs of Theorem~\ref{thm:consistent} and Theorem~\ref{thm:consistent-square loss}, the infinite-sample consistency of the \textsc{Eulac} approach.
  \item Appendix~\ref{sec-appendix:proof-FSC}: proofs of Theorem~\ref{thm:generalization-bound} and Theorem~\ref{thm:estimation-error}, the finite-sample convergence of the \textsc{Eulac} approach with the kernel-based hypothesis space.
\end{itemize}
\item Appendix~\ref{sec-appendix:experiments}: more empirical results and detailed settings for experiments.
\begin{itemize}
  \item Appendix~\ref{sec-appendix:experiments-1}: detailed descriptions for the comparison on RKHS-based \textsc{Eulac}.
  \item Appendix~\ref{sec-appendix:experiments-2}: detailed descriptions for the comparison on DNN-based \textsc{Eulac}.
  \item Appendix~\ref{sec-appendix:experiment-3}: detailed descriptions for the comparison in the complex changing environments.
\end{itemize}
\end{itemize}

\section{Supplementary Descriptions for \textsc{Eulac}}
\label{sec-appendix:algorithm}
This section introduces the non-negative LAC risk for training the deep model and the mixture proportion estimation problem. 
\subsection{Rectified Non-negative LAC Risk for Deep Model}
\label{sec-appendix:deep}
We first show the importance of the non-negative risk for training the deep model. As presented in Proposition~\ref{prop:LACU-equal-nonconvex}, classifiers' risk over the testing distribution with respect to the OVR loss
\begin{align*}
R_{\psi} ={}& \mathbb{E}_{(\mathbf{x},y)\sim P_{te}}\left[\psi(f_y(\mathbf{x}))+\sum_{k=1,k\neq y}^{K+1}\psi(-f_k(\mathbf{x}))\right]
\end{align*}
can be assessed with distributions of labeled and unlabeled data by
\begin{equation*}
\begin{split}
R_{LAC} = {}& \theta\cdot\mathbb{E}_{(\mathbf{x},y)\sim P_\textsf{kc}}\big[\psi(f_y(\mathbf{x})) + \psi(-f_{\textsf{ac}}(\mathbf{x}))\big]\\
&\ + \underbrace{\mathbb{E}_{\mathbf{x}\sim p^{te}_X(\x)}\left[\psi(f_{\textsf{ac}}(\mathbf{x}))+\sum_{k=1}^K\psi(-f_{k}(\mathbf{x}))\right] - \theta\cdot\mathbb{E}_{(\mathbf{x},y)\sim P_\textsf{kc}}\left[\psi(-f_y(\mathbf{x})) +\psi(f_{\textsf{ac}}(\mathbf{x}))\right]}_{\coloneqq R_{LAC}^{\textsf{ac}}},
\end{split}
\end{equation*}
Although the expected value of $R_{LAC}$ and $R_{\psi}$ are equivalent, their empirical formulations could behave differently. The empirical version $\hat{R}_{\psi}$ is always bounded from below by 0 since the loss function $\psi$ is non-negative. However, the empirical LAC risk $\hat{R}_{LAC}$ could go negative since the term $\hat{R}_{LAC}^{\textsf{ac}}$ is not guaranteed to bound from below. As observed by~\citet{conf/nips/KiryoNPS17}, the negative empirical risk would lead to severe over-fitting when complex models such as deep neural networks are employed. Thus, we require to rectify the empirical LAC risk.

To avoid the undesired phenomenon, we extend the non-negative risk~\citep{conf/nips/KiryoNPS17} for the OVR scheme. Since the negative terms only come from $\hat{R}^{\textsf{ac}}_{LAC}$, we only need to rectify the corresponding part of the empirical LAC risk $\hat{R}_{LAC}$. Under such a case, the \emph{non-negative} empirical LAC risk is written as
\begin{align}
 \tilde{R}_{LAC} ={}&\frac{\theta}{n_l}\sum_{i=1}^{n_l}\big[\psi(-f_{y_i}(\mathbf{x}_i)) +\psi(f_{\textsf{ac}}(\mathbf{x}_i))\big] + \max\Big\{0, \frac{1}{n_u}\sum_{i=1}^{n_u}\psi(f_\textsf{ac}(\x_i))-\frac{\theta}{n_l}\sum_{i=1}^{n_l}\psi(f_\textsf{ac}(\x_i))\Big\}\notag\\
 &\quad+\sum_{k=1}^K \max\Big\{0, \frac{1}{n_u}\sum_{i=1}^{n_u}\psi(-f_k(\x_i))-\frac{\theta}{n_l}\sum_{i=1}^{n_l}\psi(-f_k(\x_i))\Big\},\label{eq:nnOVR}
\end{align}
where we add the maximum operator for each binary classifier to avoid the negative loss. The non-negative empirical LAC risk $\tilde{R}_{LAC}$ can be optimized with the gradient descent algorithm in~\citep{conf/nips/KiryoNPS17}.

\subsection{Mixture Proportion Estimation}
\label{sec-appendix:MPE}
In this section, we first introduce the formulation of the MPE problem, followed by the description for the KME-based estimator~\citep{conf/icml/RamaswamyST16} with its theoretical analysis and practical implementation.
\paragraph{Problem formulation.} Let $G$, $H$ be distributions over a compact metric space $\mathcal{X}$ with supports given by $supp(G)$ and $supp(H)$. Let $\theta\in[0,1)$ and $F$ be a distribution that is a mixture of $G$ and $H$ with the proportion $\theta$,
\begin{equation}
F = (1-\theta)\cdot G+\theta\cdot H.
\end{equation}
The object of MPE is to estimate $\theta$ with the empirical observations $\hat{H}=\{\x_i\}_{i=1}^{n_h}$ sampled i.i.d. from $H$ and $\hat{F}=\{\x_i\}_{i=n_h+1}^{n_h+n_f}$ sampled i.i.d. from $F$.

Without any assumption, the true mixture proportion is not identifiable in general, which makes MPE an ill-defined problem. To understand this, we could consider a quick case, where $H$ and $G$ share the same distribution. In such a case, the true mixture proportion can be any value in $[0,1]$. In literature, the irreducibility assumption~\citep{journals/jmlr/BlanchardLS10} and its variants~\citep{conf/aistats/Scott15,conf/icml/RamaswamyST16}, which essentially state that $G$ cannot be expressed as any non-trivial mixture of $H$ and some other distributions, are introduced to ensure a unique and thus a identifiable mixture proportion. 

\paragraph{KME-based approach.} In this paper, we employ the KME-based estimator proposed by \citet{conf/icml/RamaswamyST16} for solving the MPE problem since it enjoys a clear theoretical understanding with nice empirical performance. Under the irreducibility assumption, authors' key observation is that the true mixture proportion $\theta$ is exactly the maximum value $\hat{\theta}$ making $G' = (F-\hat{\theta}\cdot H)/(1-\hat{\theta})$ a legal distribution. More specifically, with the growth of $\hat{\theta}$, the distribution $G'$ changes from a valid distribution to an illegal done, whose density function could be negative. Thus, if we can judge whether $G'$ is a valid distribution, then the binary search can be used for identifying the critical value.

Based on the observation, they develop an approach based on kernel mean embedding (KME), where the distribution $H$ and $F$ are embedded into a RKHS $\mathbb{H}$ as $\Phi_H = \mathbb{E}_{\x\sim H}[\phi(\x)]$ and $\Phi_G= \mathbb{E}_{\x\sim G}[\phi(\x)]$. Function $\phi:\mathcal{X}\mapsto\mathbb{H}$ is the mapping associated with kernel $\kappa:\mathcal{X}\times \X\mapsto \mathbb{R}$. To judge whether $G'$ is a legal distribution, they specify a set including embeddings of all legal distributions $\mathcal{C}=\{w \in \mathbb{H}: w=\Phi_P, \text { for some distributions } P\}$, and then calculate the distance
\begin{equation*}
d(\hat{\theta}) =\inf _{w \in \mathcal{C}}\|\Phi_{G'}-w\|_{\mathbb{H}} =\inf _{w \in \mathcal{C}}\| (\Phi_F-\hat{\theta}\cdot \Phi_H)/(1-\hat{\theta})-w\|_{\mathbb{H}}.
\end{equation*}
 We can see that when $\hat{\theta}\leq\theta$, $d(\hat{\theta})=0$, and $\hat{\theta}>\theta$, $d(\hat{\theta})>0$. As a result, binary search can be used for finding the true mixture proportion which is essentially the maximum value of $\hat{\theta}$ making $G'$ a legal distribution, i.e., $d(\hat{\theta})=0$.

\citet{conf/icml/RamaswamyST16} search for $\lambda = 1/(1-\theta)$ instead of $\theta$ to identify the mixture proportion, where the distance function is,
$$d(\hat{\lambda})=\inf _{w \in \mathcal{C}}\|\hat{\lambda} \Phi_F+(1-\hat{\lambda}) \Phi_H-w\|_{\mathbb{H}}.$$
Similarly, we have $d(\hat{\lambda})=0$ for all $\hat{\lambda}\in[0,\lambda]$ and $d(\hat{\lambda})>0$ for all $\hat{\lambda}\in[\lambda,+\infty)$. Besides the value of $d(\hat{\lambda})$, its gradient $\nabla d(\hat{\lambda})$ is also informative for identifying the true mixture proportion, as $\nabla d(\hat{\lambda}) = 0$ for all $\hat{\lambda}\in[0,\lambda]$ and $\nabla d(\hat{\lambda}) > 0$ for all $\hat{\lambda}\in[\lambda,+\infty)$. In this paper, we use the estimator based on the gradient, as it achieves more stable empirical performance as shown in~\citet{conf/icml/RamaswamyST16}.

However, distributions $H$ and $F$ are unaccessible, where only their observations $\hat{H}$ and $\hat{F}$ are available. Thus, in the practical implementation, we can empirically approximate $d(\hat{\lambda})$ by
$$\widehat{d}(\hat{\lambda})=\inf _{w \in \mathcal{C}_{S}}\|\hat{\lambda} \Phi_{\widehat{F}}+(1-\hat{\lambda}) \Phi_{\hat{H}}-w\|_{\mathbb{H}},$$
where $\mathcal{C}_{S}=\left\{w \in \mathbb{H}: w=\sum_{i=1}^{n_h+n_f} \alpha_{i} \phi\left(x_{i}\right), \text { for some } \alpha \in \Delta_{n_h+n_f}\right\}$ and $\Delta_{n_h+n_f}\subseteq \mathbb{R}^{n_h+n_f}$ is the probability simplex. After obtaining $\widehat{d}(\hat{\lambda})$, the value of $\nabla\widehat{d}(\hat{\lambda})$ can be calculated as $\nabla\widehat{d}(\hat{\lambda}) = (\widehat{d}(\hat{\lambda}+\epsilon/2)-\widehat{d}(\hat{\lambda}-\epsilon/2))/\epsilon$. However, the empirical gradient $\nabla \hat{d}(\cdot)$ does not enjoy the nice property as $\nabla d(\cdot)$, since $\lambda$ is no longer a critical value. We should specify a threshold $\nu$ instead of $0$ for identifying the true mixture proportion. The whole algorithm is summarized in Algorithm~\ref{alg:KME}.

\begin{algorithm}[!t]
\caption{KME gradient estimator}
\label{alg:KME}
\begin{multicols}{2}
  \begin{algorithmic}[1]
  \STATE \textbf{Input:} $\hat{H}=\{\x_i\}_{i=1}^{n_h}$ drawn from distribution $H$ and $\hat{F}=\{\x_i\}_{i=n_h}^{n_h+n_f}$ drawn from distribution $F$, kernel $k:\X\times\X\mapsto[0,+\infty)$, threshold $\nu\in[0,+\infty)$
  \STATE \textbf{Constants:} $\epsilon=0.02$, $\lambda_{\mathrm{UB}} = 10$
  \STATE \textbf{Output:} $\hat{\lambda}_{\nu}^G$
  \STATE $\hat{\lambda}_{\text {left }}=1, \hat{\lambda}_{\text {right }}=\hat{\lambda}_{\mathrm{UB}}$
  \STATE $K_{i, j}=\kappa\left(\mathrm{x}_{i}, \mathrm{x}_{j}\right)$ for $1 \leq i, j \leq n_h+n_f$
  \WHILE {$\hat{\lambda}_{\text {right }}- \hat{\lambda}_{\text {left }}>\epsilon$}
  \STATE $\hat{\lambda}_{\mathrm{curr}}=(\hat{\lambda}_{\mathrm{right}}+\hat{\lambda}_{\mathrm{left}})/2$
  \STATE calculate $d_1 = \hat{d}(\hat{\lambda}_{\mathrm{curr}}+\epsilon/2)$
  \STATE calculate $d_2 = \hat{d}(\hat{\lambda}_{\mathrm{curr}}-\epsilon/2)$
  \STATE calculate $s = \frac{(d_1-d_2)}{\epsilon}$
  \IF {$s>\nu$}
  \STATE $\hat{\lambda}_{\text {right }} = \hat{\lambda}_{\text {curr }}$
  \ELSE
    \STATE $\hat{\lambda}_{\text {left }} = \hat{\lambda}_{\text {curr }}$
  \ENDIF
  \ENDWHILE
  \STATE{Return:} $\hat{\lambda}_{\nu}^G = \hat{\lambda}_{\text{curr}}$
  \end{algorithmic}
\end{multicols}
\end{algorithm}

\paragraph{Theoretical guarantees.}
~\citet{conf/icml/RamaswamyST16} show that under the separability condition, the estimator $\hat{\lambda}_{\nu}^G$ returned by the KME-based algorithm converges to the true underlying value $\lambda$ as Theorem~\ref{thm:KME-converge}. For self-contentedness we present the assumption and theorem here.

\begin{myDef}[Definition 9 of~\citet{conf/icml/RamaswamyST16}]
A class of functions $\mathbb{H}$, and and distributions $G$, $H$ are said to satisfy the \emph{separability condition} with the margin $\alpha$ > 0 and the tolerance $\beta$, if there exists a function $h\in \mathbb{H}$ such that $\norm{h}_{\mathbb{H}}<1$ and
$$\E_{\x \sim G} [h(\x)] \leq \inf_{\x} h(\x)+\beta \leq \E_{\x \sim H} [h(\x)]-\alpha.$$
Moreover, we say that a kernel $\kappa$ and distributions $G$, $H$ satisfy the separability condition, if the unit norm ball in its RKHS and distributions $G$, $H$ satisfy the separability condition.
\end{myDef}
Based on the separability condition, we have the following convergence guarantee of Algorithm~\ref{alg:KME}.
\begin{myThm}[{Theorem 13 of~\citet{conf/icml/RamaswamyST16}}]
\label{thm:KME-converge}
Let $\kappa(\x,\x)\leq 1$ for all $\x\in\X$. Let the kernel $\kappa$, and distribution $G$, $H$ satisfy the separability condition with tolerance $\beta$ and margin $\alpha>0$. Let $\nu \in\left[\frac{\alpha}{4 \lambda}, \frac{3 \alpha}{4 \lambda}\right]$ and $\sqrt{\min (n_h, n_f)} \geq \frac{36 \sqrt{\log (1 / \delta)}}{\frac{\alpha}{\lambda}-\nu}$. Then with probability $1-4\delta$, we have
\begin{equation*}\begin{aligned}
\lambda^-\widehat{\lambda}_{\nu}^{G} &\leq c \cdot \sqrt{\log (1 / \delta)} \cdot(\min (n_h, n_f))^{-1 / 2} \\
\widehat{\lambda}_{\nu}^{G}-\lambda &\leq \frac{4 \beta \lambda^{*}}{\alpha}+c^{\prime} \cdot \sqrt{\log (1 / \delta)} \cdot(\min (n_h, n_f))^{-1 / 2}
\end{aligned}
\end{equation*}
for constants $c=\left(2 \lambda-1+\sqrt{2 \lambda^{*}}\right) \cdot \frac{12 \lambda}{\alpha}$ and $c' = \frac{144\left(\lambda\right)^{2}(\alpha+4 \beta)}{\alpha^{2}}$.
\end{myThm}
Theorem~\ref{thm:KME-converge} shows that under the mild condition, the estimator $\hat{\lambda}_{\nu}^G$ converges to $\lambda$ with high probability, with the increase of the samples drawn from $H$ and $F$. In our problem, $H$ and $F$ are the marginal distributions of labeled and unlabeled data, respectively. Consequently, we can identify the true mixture proportion with sound guarantees. 

\paragraph{Dimensionality reduction.} In practice, the KME-based approach could suffer from the curse of dimensionality. Inspired by the recent work~\citep{journals/corr/JainWTR16}, we use a pre-trained model to reduce the dimensionality of original input to its probability outputs, where the model is trained by taking the labeled data as positive and the unlabeled data as negative. For the RKHS-based \textsc{Eulac}, the probability outputs are the decision values of the OVR classifiers after calibration. For the deep model, outputs of the softmax layer can be employed.

In experiments, we find that the KME-based estimator with dimensionality reduction enjoys a more accurate estimation than that without dimensionality reduction. For the RKHS-based \textsc{Eulac}, as presented by Figure~\ref{fig:influence}, a more accurate mixture proportion $\theta$ leads to better performance. However, it is surprising that deep models show a different situation. Table~\ref{table:auc-loss} in Appendix~\ref{sec-appendix:experiments-2} illustrates that a relative larger $\hat{\theta}$ can help networks converge better, which finally leads to better AUC. Thus, we select $\theta$ as the one achieving the minimum training loss rather than estimating it.

\section{Proofs}
\label{sec-appendix:proof-all}
In this section, we present the proof of results in the main paper.

\subsection{Proof of Proposition~\ref{prop:LACU-equal-nonconvex}}
\label{sec-appendix:proof-prop1}
\begin{proof}
For simplicity, we substitute $f_{\textsf{ac}}$ by $f_{K+1}$ in the proof. First, recall that the risk of OVR strategy is defined as,
\[
 R_\psi(f_1,\dots,f_K,f_{K+1}) = \mathbb{E}_{(\mathbf{x},y)\sim P_{te}}\left[\psi(f_y(\mathbf{x}))+\sum_{k=1,k\neq y}^{K+1}\psi(-f_k(\mathbf{x}))\right].
 \]
 According to the class shift condition, we have

 \begin{equation}
     \label{eq:proof-prop1-1}
  \begin{aligned}
R_\psi(f_1,\dots,f_K,f_{\textsf{ac}}) &= \mathbb{E}_{(\mathbf{x},y)\sim P_{te}}\left[\psi(f_y(\mathbf{x}))+\sum_{k=1,k\neq y}^{K+1}\psi(-f_k(\mathbf{x}))\right]\\
& = \underbrace{\theta\cdot\mathbb{E}_{(\mathbf{x},y)\sim P_\textsf{kc}}\left[\psi(f_y(\mathbf{x}))+\sum_{k=1,k\neq y}^{K+1}\psi(-f_k(\mathbf{x}))\right]}_{\mathtt{term~A}}\\
&\qquad + \underbrace{(1-\theta)\cdot \mathbb{E}_{(\mathbf{x},y)\sim P_{\textsf{ac}}}\left[\psi(f_y(\mathbf{x}))+\sum_{k=1,k\neq y}^{K+1}\psi(-f_k(\mathbf{x}))\right]}_{\mathtt{term~B}}.
  \end{aligned}
 \end{equation}

Since $p_{XY}^{\textsf{ac}}(\x,y) = 0$ holds for all $\x\in\X$ and $y\neq \textsf{ac}$, we can reform the term $B$ into,
\begin{equation*}
    \begin{aligned}
        \mathtt{term~B} = (1-\theta)\cdot \mathbb{E}_{\mathbf{x}\sim p^{\textsf{ac}}_X(\x)}\left[\psi(f_{K+1}(\mathbf{x}))+\sum_{k=1}^{K}\psi(-f_k(\mathbf{x}))\right].
    \end{aligned}
\end{equation*}
The marginal distribution of augmented classes $p^{\textsf{ac}}_X(\x)$ is unknown, we reduce it to the difference of the marginal distributions of labeled and unlabeled data. Under the class shift condition, we have,
\begin{equation*}
p^{te}_{XY}(\mathbf{x},y) = \theta\cdot p^\textsf{kc}_{XY}(\mathbf{x},y) + (1-\theta)\cdot p^{\textsf{ac}}_{XY}(\mathbf{x},y).
\end{equation*}

By summing over the label space $\Y$, the marginal distribution $p_{X}^{\textsf{ac}}(\x)$ is obtained as,
    \begin{equation*}
        (1-\theta)\cdot p^{\textsf{ac}}_X(\mathbf{x}) =  p^{te}_X(\mathbf{x})-\theta\cdot p^\textsf{kc}_X(\mathbf{x}),
    \end{equation*}
    
    where the term $B$ can be further converted to the following form,
    \normalsize
    
    \begin{align}
        \mathtt{term~B} &= (1-\theta)\cdot\mathbb{E}_{\mathbf{x}\sim p^{\textsf{ac}}_X(\x)}\left[\psi(f_{K+1}(\mathbf{x}))+\sum_{k=1}^{K}\psi(-f_k(\mathbf{x}))\right]\notag\\
        & = \mathbb{E}_{\mathbf{x}\sim p^{te}_X(\x)}\left[\psi(f_{K+1}(\mathbf{x}))+\sum_{k=1}^{K}\psi(-f_k(\mathbf{x}))\right]\notag\\
        & \quad -\theta\cdot\mathbb{E}_{\mathbf{x}\sim p^\textsf{kc}_X(\x)}\left[\psi(f_{K+1}(\mathbf{x}))+\sum_{k=1}^{K}\psi(-f_k(\mathbf{x}))\right].\label{eq:proof-prop1-2}
    \end{align}
Plugging ~\eqref{eq:proof-prop1-2} into~\eqref{eq:proof-prop1-1}, we complete the proof.
\end{proof}

\subsection{Proofs of Theorem~\ref{thm:consistent} and Theorem~\ref{thm:consistent-square loss}}
\label{sec-appendix:proof-ISC}
Before showing the proofs of Theorem~\ref{thm:consistent} and Theorem~\ref{thm:consistent-square loss}, for self-contentedness, we introduce results regrading infinite-sample consistency (ISC) of OVR strategy provided by~\citet{journals/jmlr/Zhang04a}. 

First, we present the ISC property of the OVR strategy, which is stated as follows.
\begin{myThm}[Theorem 10 of~\citet{journals/jmlr/Zhang04a}]
\label{lemma-appen:ISC-OVR}
    Consider the OVR method, whose surrogate loss function is defined as $\Psi_y(\bm{f}) = \psi(f_y)+\sum_{k=1,k\neq y}^K\psi(-f_k)$. Assume $\psi$ is convex, bounded below, differentiable, and $\psi(z)<\psi(-z)$ when $z>0$. Then, OVR method is ISC on $\Omega = \mathbb{R}^K$ with respect to 0-1 classification risk. 
\end{myThm}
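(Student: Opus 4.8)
The plan is to reduce the multiclass ISC statement to a one-dimensional analysis of the binary surrogate, exploiting the fact that the OVR conditional risk decouples across coordinates. Fix a conditional probability vector $\bm{p}=(p_1,\dots,p_K)$ on the simplex and write the pointwise surrogate risk $W(\bm{p},\bm{f})=\sum_{y=1}^K p_y\Psi_y(\bm{f})$ with $\Psi_y(\bm{f})=\psi(f_y)+\sum_{k\neq y}\psi(-f_k)$. My first step is to rearrange the double sum: collecting the $\psi(-f_k)$ terms by their coefficient $\sum_{y\neq k}p_y=1-p_k$ gives
\[
W(\bm{p},\bm{f})=\sum_{k=1}^K\big[p_k\psi(f_k)+(1-p_k)\psi(-f_k)\big]=\sum_{k=1}^K T(p_k,f_k),\qquad T(\eta,\alpha):=\eta\psi(\alpha)+(1-\eta)\psi(-\alpha).
\]
Thus $W$ separates into $K$ independent binary conditional risks, the unconstrained infimum is $W^*(\bm{p})=\sum_k h(p_k)$ with $h(\eta):=\inf_{\alpha\in\mathbb{R}}T(\eta,\alpha)$, and (for $\eta\in(0,1)$) the infimum is attained at a coordinatewise minimizer $\alpha^*(\eta)$. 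By the definition of infinite-sample consistency it then suffices to show that forcing the decision $\argmax_k f_k$ onto a non-Bayes label $c\notin\argmax_k p_k$ pushes $\inf\{W(\bm{p},\bm{f}):f_c\ge f_k\ \forall k\}$ strictly above $W^*(\bm{p})$.

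The enabling observation for the sign structure is the identity $T(\eta,\alpha)-T(\eta,-\alpha)=(2\eta-1)\,g(\alpha)$, where $g(\alpha):=\psi(\alpha)-\psi(-\alpha)$. The hypothesis $\psi(z)<\psi(-z)$ for $z>0$ says exactly that $g(\alpha)<0$ for $\alpha>0$, and $g$ is odd, so the identity yields the binary calibration $\operatorname{sign}\alpha^*(\eta)=\operatorname{sign}(2\eta-1)$ with $\alpha^*(1/2)=0$: each coordinate minimizer lands on the correct side of the $1/2$ threshold. This alone, however, compares every $p_k$ only to $1/2$, not to the other posteriors, and the $\argmax$ rule needs pairwise ordering.

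The heart of the argument, and the step I expect to be the main obstacle, is upgrading per-coordinate sign control to order preservation: whenever $p_j>p_c$ one must have $\alpha^*(p_j)>\alpha^*(p_c)$, so the Bayes class receives the largest margin. I plan to prove $\alpha^*(\eta)$ is nondecreasing by differentiating the first-order condition $F(\eta,\alpha):=\eta\psi'(\alpha)-(1-\eta)\psi'(-\alpha)=0$. The implicit function theorem gives $d\alpha^*/d\eta=-g'(\alpha^*)/F_\alpha$ with $F_\alpha=\eta\psi''(\alpha^*)+(1-\eta)\psi''(-\alpha^*)\ge0$ by convexity, so the sign reduces to that of $g'(\alpha^*)=\psi'(\alpha^*)+\psi'(-\alpha^*)$. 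I will use two facts. First, $\psi'(0)\le0$, obtained by dividing $\psi(z)<\psi(-z)$ by $z$ and letting $z\to0^+$. Second, substituting the first-order condition shows $g'(\alpha^*)=\psi'(-\alpha^*)/\eta$ when $\eta>1/2$ (so $\alpha^*>0$) and $g'(\alpha^*)=\psi'(\alpha^*)/(1-\eta)$ when $\eta<1/2$ (so $\alpha^*<0$); in both cases the argument of $\psi'$ is negative, and monotonicity of $\psi'$ together with $\psi'(0)\le0$ forces $g'(\alpha^*)\le0$, hence $d\alpha^*/d\eta\ge0$. The point worth stressing is that this extracts order preservation purely from convexity and the calibration hypothesis, without the global supermodularity of $T$ that a naive Topkis argument would demand, since $g'$ need not be signed away from the optimum.

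Finally I assemble the pieces. By the monotonicity of $\alpha^*$, every minimizer $\bm{f}^*$ of $W(\bm{p},\cdot)$ satisfies $\argmax_k f_k^*\subseteq\argmax_k p_k$; forcing a strictly suboptimal $c$ to win therefore drives at least one coordinate off its separable optimum into a region where the convex term $T(p_k,\cdot)$ is strictly larger, producing the positive gap that is exactly the ISC condition on $\Omega=\mathbb{R}^K$. Beyond the monotonicity step the remaining work is bookkeeping: handling ties in $\bm{p}$ where the Bayes set is not a singleton, the degenerate endpoints $\eta\in\{0,1\}$ where $\alpha^*$ may escape to $\pm\infty$, and promoting weak monotonicity to the strict inequality $\alpha^*(p_j)>\alpha^*(p_c)$ that the gap requires. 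This last promotion is the one delicate place, as it needs strict convexity of $T$ near the optimum (equivalently, ruling out that flat directions of $\psi$ equalize the two optimal margins), and I would argue it from the strict form of the calibration inequality.
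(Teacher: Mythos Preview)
The paper does not prove this statement; it merely quotes it from \citet{journals/jmlr/Zhang04a} and invokes it as a black box in the proofs of Theorems~\ref{thm:consistent} and~\ref{thm:consistent-square loss}. Your route---the separable decomposition $W(\bm p,\bm f)=\sum_k\big[p_k\psi(f_k)+(1-p_k)\psi(-f_k)\big]$ followed by monotonicity of the scalar optimum $\alpha^*(\eta)=\argmin_\alpha T(\eta,\alpha)$---is the natural one and is essentially Zhang's own argument. One minor issue: the implicit-function step uses $\psi''$, which is not assumed; this can be repaired by comparing the two first-order conditions directly rather than differentiating.

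The substantive gap is precisely the step you flag as ``the one delicate place'': promoting weak monotonicity of $\alpha^*$ to the strict inequality $\alpha^*(p_j)>\alpha^*(p_c)$ needed for a positive surrogate gap. Your plan is to extract this from the strict calibration inequality $\psi(z)<\psi(-z)$, but that hypothesis is not strong enough. Take $\psi(z)=z^2-z^3/3$ on $[-1,1]$ and extend affinely with matched slopes ($\psi'\equiv-3$ on the left, $\psi'\equiv1$ on the right). This $\psi$ is convex, $C^1$, bounded below, and satisfies $\psi(z)-\psi(-z)<0$ for every $z>0$; yet $\psi'(0)=0$, so $T'(\eta,0)=(2\eta-1)\psi'(0)=0$ for all $\eta$, the unique minimizer is $\alpha^*(\eta)\equiv 0$, the unconstrained OVR optimum is $\bm f^*=\bm 0$, and ISC fails. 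Zhang's actual hypothesis is $\psi'(0)<0$ (the paper's transcription is slightly off); under that assumption your own FOC calculation closes the gap cleanly, since two FOCs sharing the same $\alpha$ force $\psi'(\alpha)=\psi'(-\alpha)=0$ and hence $\psi'(0)=0$ by monotonicity of $\psi'$, a contradiction. In the paper's downstream use (Theorem~\ref{thm:consistent}) the extra standing assumption $\psi(z)-\psi(-z)=-z$ gives $\psi'(0)=-\tfrac12<0$, so the discrepancy is harmless there.
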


Then, we show the relationship between the risk of an ISC method and the Bayes error as the following theorem,
\begin{myThm}[Theorem 3 of~\citet{journals/jmlr/Zhang04a}]
    \label{lemma-appen:ISC-Bayes_error}
    Let $\mathcal{B}$ be the set of all vector Borel measurable functions, which take values in $\mathbb{R}^K$. For $\Omega\subset\mathbb{R}^K$, let $\mathcal{B}_{\Omega} = \{\bm{f}\in\mathcal{B}:\forall\x,\bm{f}(\x)\in\Omega\}$. If $[\Psi_y(\cdot)]$ is ISC on $\Omega$ with respect to 0-1 classification risk, then for any  $\epsilon_1>0$, there exists $\epsilon_2>0$ such that for all underlying Borel probability measurable $D$, and $\bm{f}(\cdot)\in\mathcal{B}_{\Omega}$,
    \begin{equation*}
        \mathbb{E}_{(\x,y)\sim D}[\Psi_y(\bm{f}(\x))]\leq\inf_{\bm{f}'\in\mathcal{B}_{\Omega}}\mathbb{E}_{(\x,y)\sim D}[\Psi_y(\bm{f}'(\x))] + \epsilon_2
    \end{equation*}
    implies
\begin{equation*}
    R(T(\bm{f}(\cdot)))\leq R^* + \epsilon_1,
\end{equation*}
where $T(\cdot)$ is defined as $T(\bm{f}(\x)):=\argmax_{k=1,\dots,K,\textsf{ac}}f_k(\x)$, and $R^*$ is the optimal Bayes error.
\end{myThm}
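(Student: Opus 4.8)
The plan is to prove this as a pointwise-to-global transfer: first localize both the surrogate risk and the $0$-$1$ risk to the conditional (per-$\x$) level, then extract a uniform quantitative gap from the ISC hypothesis, and finally integrate via a level-set (Markov-type) argument.

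\textbf{Step 1 (conditional reduction).} Writing $\q(\x)=(q_1(\x),\dots,q_K(\x))$ for the conditional label probabilities $q_c(\x)=D(y=c\mid\x)$, and $W(\q,\v)=\sum_{c=1}^K q_c\,\Psi_c(\v)$ for the conditional surrogate risk at a point value $\v\in\Omega$, I would observe $\E_{(\x,y)\sim D}[\Psi_y(\bm{f}(\x))]=\E_{\x}[W(\q(\x),\bm{f}(\x))]$. Since $\mathcal{B}_\Omega$ imposes no coupling across distinct $\x$, a measurable-selection argument lets the infimum decouple pointwise: $\inf_{\bm{f}'\in\mathcal{B}_\Omega}\E[\Psi_y(\bm{f}'(\x))]=\E_\x[W^*(\q(\x))]$ with $W^*(\q)=\inf_{\v\in\Omega}W(\q,\v)$. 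The identical localization gives $R(T(\bm{f}))-R^*=\E_\x[\ell(\q(\x),T(\bm{f}(\x)))-\ell^*(\q(\x))]$, where $\ell(\q,c)=1-q_c$ and $\ell^*(\q)=\min_c\ell(\q,c)$. The hypothesis thus reads $\E_\x[W(\q(\x),\bm{f}(\x))-W^*(\q(\x))]\le\epsilon_2$, and the goal is to bound $\E_\x[\ell(\q(\x),T(\bm{f}(\x)))-\ell^*(\q(\x))]$.

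\textbf{Step 2 (uniform ISC modulus).} For a margin $\gamma>0$ I would define
\[
\Delta(\gamma)=\inf\big\{W(\q,\v)-W^*(\q)\ :\ \q\in\Lambda_K,\ \v\in\Omega,\ \ell(\q,T(\v))-\ell^*(\q)\ge\gamma\big\},
\]
the least conditional surrogate excess risk a $\gamma$-suboptimal decision can incur, with $\Lambda_K$ the probability simplex. ISC guarantees that each fixed suboptimal decision carries strictly positive conditional surrogate excess risk; I would upgrade this pointwise positivity to $\Delta(\gamma)>0$ by compactness of $\Lambda_K$, using lower semicontinuity of $\q\mapsto\inf\{W(\q,\v)-W^*(\q):T(\v)\ \gamma\text{-suboptimal at }\q\}$ together with continuity of $\q\mapsto W^*(\q)$. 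Crucially, $\Delta(\gamma)$ depends only on $\psi,\Omega,\gamma$ and not on $D$, which is exactly what furnishes the uniformity over all distributions $D$ asserted in the theorem.

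\textbf{Step 3 (integration).} Fix $\epsilon_1>0$, put $\gamma=\epsilon_1/2$, and split $\X$ into $A=\{\x:\ell(\q(\x),T(\bm{f}(\x)))-\ell^*(\q(\x))>\gamma\}$ and $A^c$. On $A^c$ the conditional $0$-$1$ excess risk is at most $\gamma$; on $A$ it is at most $1$, while by the definition of $\Delta$ the conditional surrogate excess risk there is at least $\Delta(\gamma)$. A Markov-type estimate gives $\Delta(\gamma)\,D(A)\le\E_\x[W(\q(\x),\bm{f}(\x))-W^*(\q(\x))]\le\epsilon_2$, whence $D(A)\le\epsilon_2/\Delta(\gamma)$. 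Combining the two regions yields
\[
R(T(\bm{f}))-R^*\le\frac{\epsilon_1}{2}+\frac{\epsilon_2}{\Delta(\epsilon_1/2)},
\]
so the choice $\epsilon_2=\tfrac{\epsilon_1}{2}\,\Delta(\epsilon_1/2)>0$ delivers $R(T(\bm{f}))-R^*\le\epsilon_1$, as required.

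\textbf{Main obstacle.} I expect Step 2, the uniform positivity $\Delta(\gamma)>0$, to be the hard part. Two issues demand care: $\Omega=\R^K$ is noncompact and $W(\q,\cdot)$ need not attain its infimum, so the lower-semicontinuity claim must be argued from the structure of $\psi$ (convex, bounded below) rather than by naively extracting minimizers; and the argmax map $T$ is discontinuous at ties, so one must verify that the constraint $\ell(\q,T(\v))-\ell^*(\q)\ge\gamma$ keeps the relevant $\q$ away from the degenerate tie configurations where both the $0$-$1$ and surrogate gaps collapse simultaneously. These are precisely the phenomena the ISC definition is designed to control, and I would close the gap by leaning on compactness of $\Lambda_K$ and continuity of $W^*$; the remaining steps are routine.
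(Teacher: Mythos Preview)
The paper does not give its own proof of this statement: it is quoted verbatim as Theorem~3 of \citet{journals/jmlr/Zhang04a} and used as a black box in deriving Theorem~\ref{thm:consistent}. So there is nothing to compare against in the paper itself.

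Your proposal is essentially a reconstruction of Zhang's original argument, and the architecture is right: localize to the conditional risk $W(\q,\v)$, extract a distribution-free modulus $\Delta(\gamma)$ from ISC plus compactness of the simplex, then integrate via a Markov split. Step~1 and Step~3 are routine and correct as written. The one place to be careful is exactly where you flag it: in Step~2, the map $\q\mapsto\inf\{W(\q,\v)-W^*(\q):\ell(\q,T(\v))-\ell^*(\q)\ge\gamma\}$ is a difference of two concave functions (both $W^*$ and the constrained infimum are infima of functions affine in $\q$ over $\q$-independent sets once you fix the suboptimal class $c$), so lower semicontinuity does not come for free. Zhang's route is to observe that $W^*$ is concave on the compact simplex and hence continuous, and that for each fixed label $c$ the constrained infimum $H_c(\q)=\inf\{W(\q,\v):c\in\argmax_k v_k\}$ is likewise concave and continuous; ISC says $H_c(\q)>W^*(\q)$ whenever $c$ is strictly suboptimal at $\q$, and the region $\{\q:\max_k q_k-q_c\ge\gamma\}$ is compact, so the positive gap is uniform. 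If you spell out Step~2 this way (fix $c$, use concavity $\Rightarrow$ continuity of both $H_c$ and $W^*$, then minimize over the finitely many $c$), the argument closes cleanly.
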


For the OVR strategy, we can further obtain a more quantitative bound.
\begin{myThm}[Theorem 11 of~\citet{journals/jmlr/Zhang04a}]
\label{lemma-appen:ISC-OVR-square_hinge}
Under the assumptions of Theorem~\ref{lemma-appen:ISC-OVR}. The function $V_{\psi}(q) = \inf_{z\in\mathbb{R}}[q\psi(z)+(1-q)\psi(-z)]$ is concave on $[0,1]$. Assume that there exists a constant $c_{\psi}>0$ such that
\[
    (q-q')^2\leq c_{\psi}^2\left(2V_{\psi}(\frac{q+q'}{2})-V_{\psi}(q)-V_{\psi}(q')\right),
\]
then we have for any $\bm{f}(\cdot)$,
\[
     R(T(\bm{f}(\cdot))) -  R^*\leq c_{\psi}\left(\mathbb{E}_{(\x,y)\sim D}[\Psi_y(\bm{f}(\x))]-\inf_{\bm{f}'\in\mathcal{B}_{\Omega}}\mathbb{E}_{(\x,y)\sim D}[\Psi_y(\bm{f}'(\x))]\right)^{1/2}.
\]
    
\end{myThm}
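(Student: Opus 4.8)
The plan is to establish a \emph{pointwise} calibration inequality for the one-versus-rest conditional surrogate risk and then integrate it. Write $\q(\x)=(q_1(\x),\dots,q_K(\x))$ for the $D$-conditional class probabilities, and for $q\in[0,1]$ put $g_q(z)=q\psi(z)+(1-q)\psi(-z)$, so that $V_\psi(q)=\inf_z g_q(z)$ (finite, since $\psi$ is bounded below). Three preliminary facts drive the argument: (i) rearranging the double sum yields the class-wise factorization $\sum_y q_y\Psi_y(\bm{f}(\x))=\sum_k g_{q_k(\x)}(f_k(\x))$, and since $\Omega=\R^K$ admits a pointwise minimizer, $\inf_{\bm{f}'\in\B_\Omega}\E_{(\x,y)\sim D}[\Psi_y(\bm{f}'(\x))]=\E_\x\sum_k V_\psi(q_k(\x))$; (ii) the pointwise excess 0-1 risk of $T(\bm{f})=\argmax_k f_k$ equals $q_{(1)}(\x)-q_{T(\bm{f}(\x))}(\x)$ with $q_{(1)}=\max_k q_k$; (iii) $V_\psi$ is concave on $[0,1]$, as an infimum over $z$ of the maps $q\mapsto g_q(z)$, each affine in $q$ --- this already gives the concavity asserted in the statement.

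Fix $\x$ and suppress it. Let $i\in\argmax_k q_k$ and $j=T(\bm{f})=\argmax_k f_k$; if $i=j$ the pointwise excess risk is $0$, so assume $i\neq j$, whence $q_i\ge q_j$ and $f_i\le f_j$. Discarding the nonnegative terms of all classes other than $i$ and $j$,
\[
\sum_k\big(g_{q_k}(f_k)-V_\psi(q_k)\big)\;\ge\;\big(g_{q_i}(f_i)+g_{q_j}(f_j)\big)-\big(V_\psi(q_i)+V_\psi(q_j)\big).
\]
The crux is the two-class inequality $g_{q_i}(a)+g_{q_j}(b)\ge 2\,V_\psi\big(\tfrac{q_i+q_j}{2}\big)$ whenever $a\le b$ and $q_i\ge q_j$. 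I would prove it by convexity: $(a,b)\mapsto g_{q_i}(a)+g_{q_j}(b)$ is jointly convex with unconstrained minimizer $\big(z^*(q_i),z^*(q_j)\big)$, where $z^*(q):=\argmin_z g_q(z)$; a monotonicity lemma --- $z^*$ is non-decreasing, which follows from the symmetry $z^*(q)=-z^*(1-q)$, the normalization $z^*(\tfrac12)=0$, and non-positivity of $\psi'$ at the relevant arguments (consequences of convexity of $\psi$ and $\psi(z)<\psi(-z)$ for $z>0$) --- forces $z^*(q_i)\ge z^*(q_j)$, so the unconstrained minimizer lies outside the open feasible region $\{a<b\}$ and the minimum over the convex set $\{a\le b\}$ is attained on the diagonal, where $g_{q_i}(z)+g_{q_j}(z)=2\,g_{(q_i+q_j)/2}(z)$ has minimum $2V_\psi\big(\tfrac{q_i+q_j}{2}\big)$. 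Chaining this with the display above and then applying the hypothesised quadratic bound gives, pointwise,
\[
\big(q_{(1)}-q_{T(\bm{f})}\big)^2=(q_i-q_j)^2\le c_\psi^2\Big(2V_\psi\big(\tfrac{q_i+q_j}{2}\big)-V_\psi(q_i)-V_\psi(q_j)\Big)\le c_\psi^2\sum_k\big(g_{q_k}(f_k)-V_\psi(q_k)\big).
\]

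To conclude, take square roots, integrate over $\x\sim D_X$, and use Jensen's inequality for the concave map $t\mapsto\sqrt{t}$:
\[
R(T(\bm{f}))-R^*=\E_\x\big[q_{(1)}(\x)-q_{T(\bm{f}(\x))}(\x)\big]\le c_\psi\,\E_\x\sqrt{\textstyle\sum_k\!\big(g_{q_k}(f_k)-V_\psi(q_k)\big)}\le c_\psi\sqrt{\E_\x\textstyle\sum_k\!\big(g_{q_k}(f_k)-V_\psi(q_k)\big)}\,,
\]
and by fact (i) the expression under the last root is precisely $\E_{(\x,y)\sim D}[\Psi_y(\bm{f}(\x))]-\inf_{\bm{f}'\in\B_\Omega}\E_{(\x,y)\sim D}[\Psi_y(\bm{f}'(\x))]$, which is the claimed bound. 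I expect the main obstacle to be the two-class inequality $g_{q_i}(a)+g_{q_j}(b)\ge 2V_\psi(\tfrac{q_i+q_j}{2})$: this is exactly where the one-versus-rest structure (each class handled by an independent binary problem, coupled only through the $\argmax$) meets the concavity defect of $V_\psi$ appearing in the hypothesis, and it rests on the non-obvious monotonicity of $z^*(\cdot)$, which has to be extracted from the bare assumptions on $\psi$.
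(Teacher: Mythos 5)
First, a point of comparison: the paper does not prove this statement at all --- it is imported verbatim as Theorem 11 of \citet{journals/jmlr/Zhang04a} --- so your reconstruction is doing work the paper never attempts. Your architecture is the standard one and matches Zhang's: the class-wise factorization $\sum_y q_y\Psi_y(\bm{f})=\sum_k g_{q_k}(f_k)$, the pointwise reduction to the two classes $i=\argmax_k q_k$ and $j=\argmax_k f_k$ by discarding the nonnegative excesses $g_{q_k}(f_k)-V_\psi(q_k)$ of the other classes, the hypothesized quadratic bound, and the final Jensen step are all correct.

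The genuine gap is in the two-class inequality $g_{q_i}(a)+g_{q_j}(b)\ge 2V_\psi\big(\tfrac{q_i+q_j}{2}\big)$ for $a\le b$, $q_i\ge q_j$ --- specifically in your claimed monotonicity of $z^*(q)=\argmin_z g_q(z)$. The symmetry $z^*(q)=-z^*(1-q)$ and $z^*(\tfrac12)=0$ do not give monotonicity on $[0,1]$, and ``non-positivity of $\psi'$ at the relevant arguments'' does not follow from the hypotheses you invoke. The condition you actually need is that $h(z)=\psi(z)-\psi(-z)$ be non-increasing: setting $A=g_{q_i}(a)+g_{q_j}(b)$, $B=g_{q_i}(b)+g_{q_j}(a)$ and $\bar q=\tfrac{q_i+q_j}{2}$, one checks $A+B=2\big(g_{\bar q}(a)+g_{\bar q}(b)\big)\ge 4V_\psi(\bar q)$ and $A-B=(q_i-q_j)\big(h(a)-h(b)\big)$, so $A\ge 2V_\psi(\bar q)$ holds precisely when $h(a)\ge h(b)$ for $a\le b$; the same identity shows that monotonicity of $z^*$ is equivalent to monotonicity of $h$, so your route is circular at this point. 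The assumptions of Lemma~\ref{lemma-appen:ISC-OVR} (convex, bounded below, differentiable, $\psi(z)<\psi(-z)$ for $z>0$) only force $h$ to be odd and negative on $(0,\infty)$; one can construct a convex, differentiable, bounded-below $\psi$ meeting them for which $h$ is not monotone, so this step cannot be closed from those hypotheses alone. The gap is harmless for every loss this paper actually uses --- the condition $\psi(z)-\psi(-z)=-z$ from Proposition~\ref{prop:LACU-equal} gives $h(z)=-z$, strictly decreasing --- but you should either add that (or ``$h$ non-increasing'') as an explicit hypothesis, or prove it for your loss class. I would also recommend the $A/B$ symmetrization above over the argmin route: it avoids existence and uniqueness issues for $z^*$ (the infimum of $g_q$ need not be attained, e.g.\ for the logistic loss at $q\in\{0,1\}$) and makes the required monotonicity of $h$ explicit rather than buried in a comparative-statics claim.
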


The proofs of Theorem~\ref{thm:consistent} and Theorem~\ref{thm:consistent-square loss} are direct consequences of Proposition~\ref{prop:LACU-equal-nonconvex} and aforementioned Theorems, which are stated as follows.
\begin{proof}[Proof of Theorem~\ref{thm:consistent}]
According to Proposition~\ref{prop:LACU-equal-nonconvex}, the risk $R_{LAC}$ equals to the risk of OVR strategy $R_\psi = \mathbb{E}_{(\x,y)\sim P_{te}}[\Psi_y(\bm{f}(\x))]$. Therefore, to prove the infinity-sample consistency of $R_{LAC}$, it is sufficient to demonstrate such a property of OVR strategy over the testing distribution $P_{te}$, which is shown as Theorem~\ref{lemma-appen:ISC-OVR} and Theorem~\ref{lemma-appen:ISC-Bayes_error}.
\end{proof}

\begin{proof}[Proof of Theorem~\ref{thm:consistent-square loss}]
    To prove Theorem~\ref{thm:consistent-square loss}, we first show the consistency of OVR strategy with square loss. It is easy to verify that, when taking $\psi(z) = (1-z)^2/4$, we have $V_{\psi}(q) = \inf_{z\in\mathbb{R}}[q\psi(z)+(1-q)\psi(-z)] = q(1-q)$, which is concave on $[0,1]$. As a consequence, the inequality
    \[
        (q-q')\leq c_\psi^2\left(2V_{\psi}(\frac{q + q'}{2})-V_{\psi}(q)-V_{\psi}(q')\right)
    \]    
    holds for all $q$, $q'\in \mathbb{R}$ with $c_\psi = \sqrt{2}$.
    
    According to Theorem~\ref{lemma-appen:ISC-OVR-square_hinge}, the excess risk w.r.t. 0-1 loss function over $P_{te}$ is bounded by that of the OVR method,
    \[
        R(T(\bm{f}(\cdot))) -  R^*\leq \sqrt{2\Big(\mathbb{E}_{(\x,y)\sim P_{te}}[\Psi_y(\bm{f}(\x))]-\inf_{\bm{f}'} \mathbb{E}_{(\x,y)\sim P_{te}}[\Psi_y(\bm{f}'(\x))]\Big)} = \sqrt{2\Big(R_\psi(\bm{f}) - R_\psi^*\Big)},
    \]
         where $R_\psi^* = \inf_{\bm{f}'}R_{\psi}(\bm{f}')$. Then by applying the equality of the risk of OVR strategy $R_\psi$ and that of our approach $R_{LAC}$ from Proposition~\ref{prop:LACU-equal-nonconvex}, we complete the proof.
\end{proof}

\subsection{Proofs of Theorem~\ref{thm:generalization-bound} and Theorem~\ref{thm:estimation-error}}
We first analyze the generalization error of the \textsc{Eulac} approach with kernel-based hypothesis space (Theorem~\ref{thm:generalization-bound}), based on which we provide the proof of the estimation error (Theorem~\ref{thm:estimation-error}) is presented.
\label{sec-appendix:proof-FSC}
\begin{proof}[Proof of Theorem~\ref{thm:generalization-bound}]
Recall that
\begin{equation}
R_{LAC}= \theta\underbrace{\mathbb{E}_{(\mathbf{x},y)\sim P_\textsf{kc}}\left[f_{\textsf{ac}}(\mathbf{x})-f_y(\mathbf{x})\right]}_{\coloneqq R_{LAC}^A} + \underbrace{\mathbb{E}_{\mathbf{x}\sim p_{X}^{te}(\x)}\left[\psi(f_{\textsf{ac}}(\mathbf{x}))+\sum_{k=1}^K\psi(-f_{k}(\mathbf{x}))\right]}_{\coloneqq R_{LAC}^B}\\,
\end{equation}
and 
\begin{equation}
\hat{R}_{LAC} = \theta\underbrace{\frac{1}{n_l}\sum_{i=1}^{n_l}\left( f_{\textsf{ac}}(\mathbf{x}_i)-f_{y_i}(\mathbf{x}_i)\right)}_{\coloneqq \hat{R}_{LAC}^A}+\underbrace{\frac{1}{n_u}\sum_{i=1}^{n_u} \left(\psi(f_{\textsf{ac}}(\mathbf{x}_i))+\sum_{k=1}^K\psi(-f_{k}(\mathbf{x}_i))\right)}_{\coloneqq \hat{R}_{LAC}^B}.
\end{equation}
To obtain the generalization bound of $\hat{R}_{LAC}$, it suffices to upper bound $\hat{R}_{LAC}^A$ and $\hat{R}_{LAC}^B$. 

Firstly, we study the generalization bound of $\hat{R}_{LAC}^A$. With the kernel-based hypothesis set $\mathcal{F} = \{\mathbf{x}\mapsto\langle\mathbf{w},\Phi(\mathbf{x})\rangle\ \vert\ \Vert\mathbf{w}\Vert_{\mathbb{F}}\leq\Lambda\}$ and $\kappa(\x,\x)\leq r^2$, according to McDiarmid's inequality and the standard analysis for generalization bound based on Rademacher complexity~\citep[Theorem 3.1]{book'12:foundation}, we have,
\begin{equation}
\label{eqproof:thm3-1}
R_{LAC}^A(f_1,\dots,f_K,f_{\textsf{ac}})\leq\hat{R}_{LAC}^A(f_1,\dots,f_K,f_{\textsf{ac}})+2\hat{\mathfrak{R}}_{D_L}(\widetilde{\mathcal{F}})+6\Lambda r\sqrt{\frac{2\log(2/\delta')}{n_l}},
\end{equation}
holds with probability at least $1-\delta'$, where $$\widetilde{\mathcal{F}}=\{(\mathbf{x},y)\mapsto \langle\mathbf{w}^{\textsf{ac}},\Phi(\mathbf{x})\rangle-\langle\mathbf{w}^{y},\Phi(\mathbf{x})\rangle \ \vert\ \Vert\mathbf{w}^1\Vert_{\mathbb{F}},\dots,\Vert\mathbf{w}^K\Vert_{\mathbb{F}},\Vert\mathbf{w}^{\textsf{ac}}\Vert_{\mathbb{F}}\leq \Lambda\}.$$ 

The Rademacher complexity of hypothesis set $\hat{\mathfrak{R}}_{D_L}(\widetilde{\mathcal{F}})$ can be further bounded by,

\begin{align*}
  \hat{\mathfrak{R}}_{D_L}(\widetilde{\mathcal{F}}) &= \frac{1}{n_l}\mathbb{E}_{\bm{\sigma}}\left[\sup_{f_1,\dots,f_K,f_{\textsf{ac}}\in\mathcal{F}}\sum_{i=1}^{n_l}\sigma_i(f_{\textsf{ac}}(\mathbf{x}_i)-f_{y_i}(\mathbf{x}_i))\right]\\
  & \leq \frac{1}{n_l}\mathbb{E}_{\sigma}\left[\sup_{f_{\textsf{ac}}\in\mathcal{F}}\sum_{i=1}^{n_l}\sigma_i f_{\textsf{ac}}(\mathbf{x}_i)\right]+\frac{1}{n_l}\mathbb{E}_{\sigma}\left[\sup_{f_1,\dots,f_K\in\mathcal{F}}\sum_{i=1}^{n_l}\sigma_i f_{y_i}(\mathbf{x}_i)\right]\\
  & = \hat{\mathfrak{R}}_{D_L}(\mathcal{F}) + \frac{1}{n_l}\mathbb{E}_{\sigma}\left[\sup_{f_1,\dots,f_K\in\mathcal{F}}\sum_{i=1}^{n_l}\sum_{j\in[K]}\sigma_i f_j(\mathbf{x}_i)\cdot\indicator(y_i=j)\right]\\
  &\leq \hat{\mathfrak{R}}_{D_L}(\mathcal{F}) + \frac{1}{n_l}\sum_{j\in[K]}\mathbb{E}_{\sigma}\left[\sup_{f_j\in\mathcal{F}}\sum_{i=1}^{n_l}\sigma_i f_j(\mathbf{x}_i)\cdot\indicator(y_i=j)\right]\\
  & = \hat{\mathfrak{R}}_{D_L}(\mathcal{F}) + \frac{1}{n_l}\sum_{j\in[K]}\mathbb{E}_{\sigma}\left[\sup_{f_j\in\mathcal{F}}\sum_{i=1}^{n_l}\sigma_i f_j(\mathbf{x}_i)\cdot\left(\frac{2\indicator(y_i=j)-1}{2}+\frac{1}{2}\right)\right]\\
  & \leq \hat{\mathfrak{R}}_{D_L}(\mathcal{F}) + \frac{1}{n_l}\sum_{j\in[K]}\mathbb{E}_{\sigma}\left[\sup_{f_j\in\mathcal{F}}\sum_{i=1}^{n_l}\sigma_i f_j(\mathbf{x}_i)\cdot\frac{2\indicator(y_i=j)-1}{2}\right] \\
  & \quad+ \frac{1}{n_l}\sum_{j\in[K]}\mathbb{E}_{\sigma}\left[\sup_{f_j\in\mathcal{F}}\sum_{i=1}^{n_l}\frac{1}{2}\sigma_i f_j(\mathbf{x}_i)\right]\\
  & = \hat{\mathfrak{R}}_{D_L}(\mathcal{F}) + \frac{K}{2}\hat{\mathfrak{R}}_{D_L}(\mathcal{F}) + \frac{K}{2}\hat{\mathfrak{R}}_{D_L}(\mathcal{F})\\
  & = (K+1)\hat{\mathfrak{R}}_{D_L}(\mathcal{F})
\end{align*}
By Theorem 5.5 of~\citet{book'12:foundation}, the Rademacher complexity of the kernel-based hypothesis set is bounded by $\frac{\Lambda r}{\sqrt{n_l}}$, i.e.,  $\hat{\mathfrak{R}}_{D_L}(\mathcal{F})\leq\frac{\Lambda r}{\sqrt{n_l}}$. As a consequence, we can get the generalization bound
\begin{equation}
\label{eq:generalization-LACU-A}
R_{LAC}^A(f_1,\dots,f_K,f_{\textsf{ac}})\leq\hat{R}_{LAC}^A(f_1,\dots,f_K,f_{\textsf{ac}})+\frac{2(K+1)\Lambda r}{\sqrt{n_l}}+6\Lambda r\sqrt{\frac{2\log(2/\delta')}{n_l}},
\end{equation}
which holds with probability at least $1-\delta'$ for all $f_1,\dots,f_K,f_{\textsf{ac}}\in\mathcal{F}$. 

Next, we turn to bound the term $R_{LAC}^B$. An argument similar to the one used to obtain~\eqref{eqproof:thm3-1} shows that
\begin{equation}
\label{eq:generalization-LACU-B-raw}
R_{LAC}^B(f_1,\dots,f_K,f_{\textsf{ac}})\leq \hat{R}_{LAC}^B(f_1,\dots,f_K,f_{\textsf{ac}}) + 2\hat{\mathfrak{R}}_{D_L}(\widetilde{\mathcal{F}}_{\Psi})+3(K+1)B_\psi\sqrt{\frac{\log(2/\delta')}{n_u}}
\end{equation}
holds with probability at least $1-\delta'$ for all $f_1,\dots,f_K,f_{\textsf{ac}}\in\mathcal{F}$, where $B_\psi = \sup_{a\in[-\Lambda r,\Lambda r]}\psi(a)$ and $\widetilde{\mathcal{F}}_{\Psi}=\{\mathbf{x}\mapsto\psi(f_{\textsf{ac}}(\mathbf{x}))-\sum_{k=1}^K \psi(-f_{k}(\mathbf{x}))\ \vert\ f_1,\dots,f_K,f_{\textsf{ac}}\in \mathcal{F}\}$. According to the Talagrand's comparison inequality~\citep{book/springer/oracle2011} and the fact $\hat{\mathfrak{R}}_{D_L}(\mathcal{F}_1+\mathcal{F}_2)\leq\hat{\mathfrak{R}}_{D_L}(\mathcal{F}_1)+\hat{\mathfrak{R}}_{D_L}(\mathcal{F}_2)$, we have,
\begin{equation}
\label{eq:R-LACU-B}
\hat{\mathfrak{R}}_{D_L}(\widetilde{\mathcal{F}}_{\Psi})\leq (K+1)L\hat{\mathfrak{R}}_{D_L}(\mathcal{F})\leq(K+1)L\frac{\Lambda r}{\sqrt{n_u}},
\end{equation}
where $L$ is the Lipschitz constant of surrogate loss function $\psi$. Notice that some surrogate loss functions, whose first order derivative is unbounded (like square loss), are not Lipschitz continuous on $\mathbb{R}$. However, since $f_1,\dots,f_K,f_{\textsf{ac}}$ are bounded in $[-\Lambda r,\Lambda r]$, the Talagrand's Lemma is still applicable, where the term $R_{LAC}^B(f_1,\dots,f_K,f_{\textsf{ac}})$ can also be bounded following the same argument.

Plugging ~\eqref{eq:R-LACU-B} into~\eqref{eq:generalization-LACU-B-raw}, we can get the generalization bound of $R_{LAC}^B$ that
\begin{equation}
\label{eq:generalization-LACU-B}
R_{LAC}^B(f_1,\dots,f_K,f_{\textsf{ac}})\leq \hat{R}_{LAC}^B(f_1,\dots,f_K,f_{\textsf{ac}}) + \frac{2(K+1)L\Lambda r}{\sqrt{n_u}}+3(K+1)B_\psi\sqrt{\frac{\log(2/\delta')}{n_u}},
\end{equation}
which holds with probability at least $1-\delta'$. 
Let $\delta' = \frac{\delta}{2}$ and sum~\eqref{eq:generalization-LACU-A} and~\eqref{eq:generalization-LACU-B} up, we can get that
\begin{equation}
\begin{aligned}
&R_{LAC}(f_1,\dots,f_{K},f_nc)\\
=\ &\theta\cdot R_{LAC}^A + R_{LAC}^B\\
\leq {}&\hat{R}_{LAC}(f_1,\dots,f_K,f_{\textsf{ac}})+\theta\cdot\Bigg(\frac{2(K+1)\Lambda r}{\sqrt{n_l}}+6\Lambda r\sqrt{\frac{2\log(4/\delta)}{n_l}}\Bigg) + \frac{2(K+1)L\Lambda r}{\sqrt{n_u}}\\
&\quad+3(K+1)B_\psi\sqrt{\frac{\log(4/\delta)}{n_u}}\\
\leq {}&\hat{R}_{LAC}(f_1,\dots,f_K,f_{\textsf{ac}})+\frac{2(K+1)\Lambda r}{\sqrt{n_l}}+6\Lambda r\sqrt{\frac{2\log(4/\delta)}{n_l}} + \frac{2(K+1)L\Lambda r}{\sqrt{n_u}}\\
&\quad+3(K+1)B_\psi\sqrt{\frac{\log(4/\delta)}{n_u}}\\
\end{aligned}
\end{equation}
holds with probability at least $1-\delta$, which finishes the proof.
\end{proof}

\begin{proof}[Proof of Theorem~\ref{thm:estimation-error}]
Recall that optimization problem~\eqref{eq:LACU-ERM} is formulated as,
\begin{equation*}
\min_{f_1,\dots,f_k,f_{\textsf{ac}}\in\mathbb{F}} \hat{R}_{LAC}(f_1,\dots,f_K,f_{\textsf{ac}})+\lambda\left(\sum_{k=1}^K \Vert f_k\Vert^2_{\mathbb{F}}+\Vert f_{\textsf{ac}}\Vert^2_{\mathbb{F}}\right),
\end{equation*}
whose regularization path (the set of solutions to these problems with varying regularization parameter $\lambda$) is identical to the corresponding optimization problem parameterized in terms of the constraint on RKHS norm. Therefore, the optimal solution $(\hat{f}_1,\dots,\hat{f}_K,\hat{f}_{\textsf{ac}})$ of~\eqref{eq:LACU-ERM} with certain $\lambda>0$ is also the solution for
\begin{equation}
\label{eq-appen:optimization_constrain}
    \begin{split}
    &\min_{f_1,\dots,f_K,f_{\textsf{ac}}\in\mathbb{F}} \hat{R}_{LAC}(f_1,\dots,f_K,f_{\textsf{ac}})\\
    &\qquad \ \mbox{s.t.} \quad\sum_{k=1}^K\Vert f_k\Vert^2_{\mathbb{F}}+\Vert f_{\textsf{ac}}\Vert_{\mathbb{F}}^2\leq c_\lambda^2, 
    \end{split}
\end{equation}
with certain $c_\lambda>0$. 

Thus, to obtain the estimation error bound of~\eqref{eq:LACU-ERM}, it is equal to consider the optimization problem~\eqref{eq-appen:optimization_constrain}, where $\hat{R}_{LAC}$ is minimized on the hypothesis set $\mathscr{F}=\{(f_1,\dots,f_K,f_{\textsf{ac}})\ \vert\ f_1,\dots,f_K,f_{\textsf{ac}}\in\mathbb{F},\sum_{k=1}^K\Vert f_k\Vert^2_\mathbb{F}+\Vert f_{\textsf{ac}}\Vert_\mathbb{F}^2\leq c_\lambda^2\}$. Since $(f_1^*,\dots,f_K^*,f_{\textsf{ac}}^*)=\argmin_{\bm{f}\in\mathscr{F}}R_{LAC}(f_1,\dots,f_K,f_{\textsf{ac}})$, where $\bm{f} = (f_1,\dots,f_K,f_{\textsf{ac}})$, we have 
\begin{equation*}
    \begin{split}
          {} & R_{LAC}(\hat{f}_1,\dots,\hat{f}_{K},\hat{f}_{\textsf{ac}}) -  R_{LAC}(f_1^*,\dots,f_{K}^*,f_{\textsf{ac}}^*)\\
        = {} & R_{LAC}(\hat{f}_1,\dots,\hat{f}_{K},\hat{f}_{\textsf{ac}}) - \hat{R}_{LAC}(\hat{f}_1,\dots,\hat{f}_{K},\hat{f}_{\textsf{ac}}) + \hat{R}_{LAC}(\hat{f}_1,\dots,\hat{f}_{K},\hat{f}_{\textsf{ac}}) - R_{LAC}(f_1^*,\dots,f_{K}^*,f_{\textsf{ac}}^*)\\
        \leq {} & R_{LAC}(\hat{f}_1,\dots,\hat{f}_{K},\hat{f}_{\textsf{ac}}) - \hat{R}_{LAC}(\hat{f}_1,\dots,\hat{f}_{K},\hat{f}_{\textsf{ac}}) + \hat{R}_{LAC}(f_1^*\dots,f_{K}^*,f_{\textsf{ac}}^*) - R_{LAC}(f_1^*,\dots,f_{K}^*,f_{\textsf{ac}}^*)\\
        \leq {} & 2\sup_{\bm{f}\in\mathscr{F}} \vert R_{LAC}(f_1,\dots,f_K,f_{\textsf{ac}})-\hat{R}_{LAC}(f_1,\dots,f_K,f_{\textsf{ac}})\vert
    \end{split}
\end{equation*}

Let $\mathcal{F}_\lambda = \{\mathbf{x}\mapsto\langle\mathbf{w},\Phi(\mathbf{x})\rangle\ \vert\ \Vert\mathbf{w}\Vert_{\mathbb{F}}\leq c_\lambda\}$. Since $\mathscr{F}\subset\mathcal{F}^{K+1}_\lambda$, we have
\begin{equation*}
    \begin{split}
        	 {} & R_{LAC}(\hat{f}_1,\dots,\hat{f}_{K},\hat{f}_{\textsf{ac}}) -  R_{LAC}(f_1^*,\dots,f_{K}^*,f_{\textsf{ac}}^*)\\
        \leq {} & 2\sup_{\bm{f}\in\mathcal{F}^{K+1}_\lambda} \vert R_{LAC}(f_1,\dots,f_K,f_{\textsf{ac}})-\hat{R}_{LAC\textbf{}}(f_1,\dots,f_K,f_{\textsf{ac}})\vert,
    \end{split}
\end{equation*}
where $\mathcal{F}_{\lambda}^{K+1}$ refers to the Cartesian product over $\mathcal{F}_{\lambda}$ for $K$ times.

According to Theorem~\ref{thm:generalization-bound} and its counterpart which upper bounds $\hat{R}_{LAC}(f_1,\dots,f_K,f_{\textsf{ac}})-R_{LAC}(f_1,\dots,f_K,f_{\textsf{ac}})$, we have
\begin{align*}
       	 {} & 2\sup_{\bm{f}\in\mathcal{F}^{K+1}_\lambda} \vert R_{LAC}(f_1,\dots,f_K,f_{\textsf{ac}})-\hat{R}_{LAC}(f_1,\dots,f_K,f_{\textsf{ac}})\vert\\
    \leq {} & \frac{4(K+1)c_\lambda r}{\sqrt{n_l}}+6c_\lambda
    r\sqrt{\frac{2\log(8/\delta)}{n_l}}+ \frac{4(K+1)Lc_\lambda r}{\sqrt{n_u}}+12(K+1)B_\psi\sqrt{\frac{\log(8/\delta)}{n_u}},
\end{align*}
    which holds with probability at least $1-\delta$. The above argument completes the proof.
\end{proof}

\section{Details of Experiments}
\label{sec-appendix:experiments}
\subsection{Comparison on RKHS-based \textsc{Eulac}}
\label{sec-appendix:experiments-1}
\textbf{Datasets.} We first provide statistics of datasets used in the experiments in Table~\ref{table:dataset}. For all datasets, we normalize features of the raw data in the range of $[0,1]$. Datasets usps, segment, satimage, pendigits, SensIT Vehicle (SenseVeh), mnist and shuttle can be download from~\url{https://www.csie.ntu.edu.tw/~cjlin/libsvmtools/datasets/}. Optdigits and landsets are from~\url{https://archive.ics.uci.edu/ml/index.php}.

For each dataset, the labeled training data, unlabeled training data and testing data contain 500, 1000 and 1000 instances respectively. The instance sampling procedure also repeats 10 times. To simulate the augmentation of classes, we randomly select half of the total classes as augmented for 10 times. Thus, there are in total 100 configurations for each datasets. We remark that the testing data are never used in the training stage.

\begin{table}[!h]
\centering
\caption{Statistics of datasets used in the experiments.}
\vspace{-1mm}
\label{table:dataset}
\resizebox{0.6\textwidth}{!}{
\begin{tabular}{clrrrrr}
\toprule
\multirow{2}{*}{Index} & \multirow{2}{*}{Datasets} & \multirow{2}{*}{\# class} & \multirow{2}{*}{\# dim} & \multicolumn{3}{c}{$\vert C \vert$} \\ \cline{5-7} 
           &               &                    &                            & min        & max        & avg       \\ \midrule
1 & usps                      & 10                 & 256                        & 708        & 1553       & 929       \\
2 & segment                   & 7                  & 19                         & 330        & 330        & 330       \\
3 & satimage                  & 6                  & 36                         & 626        & 1533       & 1073      \\
4 & optdigits                 & 10                 & 64                         & 554        & 572        & 562       \\
5 & pendigits                 & 10                 & 16                         & 1055       & 1144       & 1099      \\
6 & SenseVeh                  & 3                  & 100                        & 12316      & 26423      & 20527     \\
7 & landset                   & 6                  & 73                         & 626        & 1533       & 1073      \\
8 & mnist                     & 10                 & 780                        & 6313       & 7877       & 7000      \\
9 & shuttle                   & 7                  & 9                          & 10         & 45586      & 8286      \\ \bottomrule
\end{tabular}}
\end{table}

\textbf{Contenders and parameters setting.} We compare the \textsc{Eulac} with six methods, including four without exploiting unlabeled data and two utilizing them.
\begin{itemize}
      \item \textbf{\textsc{Eulac}} is the basic version of our proposal, which minimizes LAC risk $R_{LAC}$~\eqref{eq:expect-PU} on the kernel-based hypothesis set with the squared loss $\psi(z) = (1-z)^2/4$. We adapt the Gaussian kernel $\kappa(\mathbf{x},\mathbf{y}) = \exp(\Vert\mathbf{x}-\mathbf{y}\Vert^2/2\sigma^2)$, where the bandwidth $\sigma$ is selected from $\{10^{-2},\dots,10\}\times\mbox{median}_{i,j\in[n_l+n_u]}(\Vert\mathbf{x}_i-\mathbf{x}_j\Vert)$ and the regularization parameters $\{10^{-3},\dots,10^1\}$ by cross validation. Meanwhile, we use KME-base to estimate the mixture proportion, where the threshold $\nu$ for the KME-based MPE estimator (Algorithm~\ref{alg:KME}) is set to 0.25.
\end{itemize}

The four methods without exploiting unlabeled data are,
\begin{itemize}
    \item \textbf{OVR-SVM} is a powerful strategy for multi-class classification~\citep{journals/jmlr/RifkinK03}. In order to adapt OVR-SVM to LAC problem, the method predicts an instance belonging to the augmented class when $\max_{k\in[K]} f_k<0$, otherwise it predicts as the classical OVR-SVM. The parameters setting is the same as that in \textsc{Eulac}, except that the bandwidth is selected from $\{10^{-2},\dots,10\}\times\mbox{median}_{i,j\in[n_l]}(\Vert\mathbf{x}_i-\mathbf{x}_j\Vert)$ as there are no unlabeled data.

    \item \textbf{W-SVM}~\citep{journals/pami/ScheirerJB14} is an SVM-based method, where both one-class SVM and binary SVM incorporating with extreme value theory are used to predict for the augmented class. The parameters setting is the same as that in \textsc{Eulac}, except that the bandwidth is selected from $\{10^{-2},\dots,10\}\times\mbox{median}_{i,j\in[n_l]}(\Vert\mathbf{x}_i-\mathbf{x}_j\Vert)$ as there are no unlabeled data.

    \item \textbf{OSNN}~\citep{journals/ml/Mendes-JuniorSW17} is a nearest neighbor-based method, which predicts an instance belonging to augmented class if it shares similar distances with two nearest neighbors from different classes. We set parameters according to the corresponding paper.
    \item \textbf{EVM}~\citep{journals/pami/RuddJSB18} is also based on the extreme value theory, and it uses non-linear radial basis functions. We set parameters according to the corresponding paper.
\end{itemize}

We list other two methods exploiting unlabeled data in the following.
\begin{itemize}
    \item \textbf{LACU-SVM}~\citep{conf/aaai/DaYZ14} is an SVM based method that utilizes the geometry property of unlabeled data to tune the decision boundaries of classifiers. We set parameters according to the corresponding paper.
    \item \textbf{PAC-iForest}~\citep{conf/icml/LiuGDFH18} is an iForest~\citep{conf/icdm/LiuTZ08} based method, which selects the rejection threshold by using unlabeled data to ensure desired novelty detection ratio. We use PAC-iForest to detect augmented classes and SVM to classify known classes. The novel detection ratio is set to 0.7 for all datasets.
\end{itemize}

\textbf{Performance Measure.} We use three measures to evaluate the performance of contenders, including accuracy, Macro-F1 and AUC of the augmented classes score. Denoting by $D_{te} = \{(\x_i,y_i)\}_{i=1}^{n_{te}}$ the testing dataset including augmented classes (\textsf{ac}), and $\hat{y}_i = f(\x_i)$ the prediction for the $i$-th instance, the first two measures focus on the \emph{overall performance} as,
\begin{itemize}
    \item \textbf{Accuracy}: the averaged predictive accuracy on testing data, $$\mathrm{Acc} = \frac{1}{n_{te}}\sum_{i=1}^{n_{te}}\indicator({y_i=\hat{y}_i});$$
    \item \textbf{Macro-F1}: the averaged F1 score including that of the augmented class, $$\text{Macro-F1} = \frac{1}{K+1}\bigg(F_{\textsf{ac}}+\sum_{k=1}^K F_k\bigg),$$ where $F_k = (2\times P_k\times R_k)/(P_k+R_k)$ with $P_k$ ($R_k$) the precise (recall) of the $k$-th class. Similar notations ($F_{\textsf{ac}}$, $P_{\textsf{ac}}, R_{\textsf{ac}}$) are defined for the augmented class.
\end{itemize}

\textbf{AUC of identifying the augmented class.} We also report the AUC of contenders' augmented class score for comparing their ability for handling the augmented class. The reason for this measure is that most compared methods handling the LAC problem following the three steps: (1) assigning a score for each instance to evaluate its probability for belonging to the augmented classes; (2) specifying a threshold and identifying the augmented class; (3) classifying the rest instances to known classes. In step 2, the compared methods always require to set the threshold empirically, where a misspecified one would lead to performance degeneration. In contrast to the contenders, our approach can perform an unbiased cross validation to select the parameters. Thus, to make further comparison and ablate the influence on misspecified threshold, we employ AUC to measure the quality of the augmented class score returned by the compared methods. 

For different compared methods, we list the way of computing the augmented class score to measure how possible a sample could be the augmented class sample.
\begin{itemize}
  \item \textbf{OVR-SVM} predicts augmented class when $\max_{k\in[K]}f_k(\x)\leq 0$. So, it is natural to use $\max_{k\in[K]}f_k(\x)$ as a score to measure how possible a sample coming from the augmented class.
  \item \textbf{OSNN} is a nearest neighbor based method, denoting by $\x_i$ the testing sample, its augmented class score is calculate by
  $$
    R(\x_i) = \mathrm{dist}(\x_i,\x_t) / \mathrm{dist}(\x_i,\x_u),
  $$
  where $\x_t$ is the nearest sample of $\x_i$ in the training set under the distance $\mathrm{dist}:\X\times\X\mapsto\mathbb{R}$ and $\x_u$ is the rest nearest sample whose label is different from $\x_t$.

  \item \textbf{EVM} classifies a sample $\x_i$ as the augmented class when its empirically maximum conditional probability $\hat{P}(k|\x_i)=\argmax_{\{i:y_i=k\}}\Psi(\x_i)$ is less than a given threshold $\delta>0$, where $\Psi:\X\mapsto\mathbb{R}$ is the density function learned in training phase. Therefore, the augmented class score is given by $\hat{P}(k|\x_i)$.
  \item \textbf{LACU-SVM} is based on OVR-SVM, it adjusts the margin of OVR-SVM with unlabeled data. Similar to OVR-SVM, the augmented class score of LACU-SVM is also the maximum output of probabilities for each class, which is given by $\max_{k\in[K]} f_k(\x)$.
  \item \textbf{PAC-iForest} outputs the path length of each sample in iForest and thus the augmented class score of each sample is the path length in iForest.
  \item \textbf{\textsc{Eulac}:} Although \textsc{Eulac} does not identify the augmented class following the aforementioned three steps, to make comparisons, we take the output of the classifier $f_{\textsf{ac}}(\x)$ as the augmented class score, as $f_{\textsf{ac}}$ is the binary classifier for the augmented class.

  \item \textbf{W-SVM:} We do not consider the AUC for W-SVM. The reason is that W-SVM takes a two-step mechanism to filter the augmented classes and thus there are two `scores' that decide the labeling. As a result, varying either of the two `scores' could change recalls from zero to complete, and the corresponding ROC curve will no longer be monotonic, which means it is improper. In this case, AUC has lost its significance as the calibration-free measure of the performance of W-SVM.
\end{itemize}

\textbf{More empirical results.} We provide more performance comparisons of all methods on benchmark datasets. Table~\ref{table:experiment1-Accuracy} reports the accuracy of all contenders and Table~\ref{table:experiment1-AUC} presents the AUC of all contenders.
\begin{table}[!t]
\centering
\caption{Accuracy on benchmark datasets. The best method is emphasized in bold. Besides, $\bullet$ indicates that \textsc{Eulac} is significantly better than others (paired $t$-tests at 5\% significance level).}
\vspace{-1mm}
\label{table:experiment1-Accuracy}
\resizebox{1\textwidth}{!}{
\begin{tabular}{lllllllll}
\toprule
\multirow{1}{*}{Dataset}                     & \multicolumn{1}{c}{\multirow{1}{*}{OVR-SVM}}  & \multicolumn{1}{c}{\multirow{1}{*}{W-SVM}} & \multicolumn{1}{c}{\multirow{1}{*}{OSNN}}  & \multicolumn{1}{c}{\multirow{1}{*}{EVM}}   & \multicolumn{1}{c}{\multirow{1}{*}{LACU-SVM}} & \multicolumn{1}{c}{PAC-iForest}      & \multicolumn{1}{c}{\multirow{1}{*}{\textsc{Eulac}}} \\ \midrule
usps                                         & 72.94  $\pm$ 3.91 $\bullet$                   & 86.00 $\pm$ 4.32 $\bullet$                & 83.26 $\pm$ 2.15  $\bullet$                 & 54.30 $\pm$ 3.27  $\bullet$                & 75.44 $\pm$ 6.32 $\bullet$                & 53.43 $\pm$ 3.08   $\bullet$             & \textbf{90.00 $\pm$ 1.01}        \\
segment                                      & 73.36  $\pm$ 7.56 $\bullet$                   & 87.05 $\pm$ 8.23 $\bullet$                & 84.51 $\pm$ 7.57  $\bullet$                 & 79.60 $\pm$ 4.81  $\bullet$                & 51.03 $\pm$ 10.3 $\bullet$                & 54.83 $\pm$ 4.08   $\bullet$             & \textbf{89.33 $\pm$ 2.21}       \\
satimage                                     & 52.52  $\pm$ 11.3 $\bullet$                   & 82.15 $\pm$ 11.2 $\bullet$                & 60.93 $\pm$ 9.95  $\bullet$                 & 61.50 $\pm$ 9.35  $\bullet$                & 61.87 $\pm$ 14.7 $\bullet$                & 56.71 $\pm$ 9.90   $\bullet$             & \textbf{87.44 $\pm$ 3.10}       \\
optdigits                                    & 74.16  $\pm$ 2.45 $\bullet$                   & 91.53 $\pm$ 3.74 $\bullet$                & 89.02 $\pm$ 2.58  $\bullet$                 & 71.40 $\pm$ 1.57  $\bullet$                & 80.59 $\pm$ 3.20 $\bullet$                & 57.23 $\pm$ 1.11   $\bullet$             & \textbf{93.73 $\pm$ 1.01}       \\
pendigits                                    & 77.09  $\pm$ 3.25 $\bullet$                   & 90.87 $\pm$ 4.14                      & 87.86 $\pm$ 2.68  $\bullet$                 & 76.50 $\pm$ 2.16  $\bullet$                & 73.18 $\pm$ 4.91 $\bullet$                & 59.89 $\pm$ 1.58   $\bullet$             & 90.82 $\pm$ 1.33        \\
SenseVeh                                     & 59.80  $\pm$ 5.88 $\bullet$                   & 57.00 $\pm$ 6.12 $\bullet$                & 51.30 $\pm$ 5.69  $\bullet$                 & 32.00 $\pm$ 0.40  $\bullet$                & 54.16 $\pm$ 1.57 $\bullet$                & 64.10 $\pm$ 3.62   $\bullet$             & \textbf{67.78 $\pm$ 15.6}       \\
landset                                      & 51.90  $\pm$ 7.49 $\bullet$                   & 82.89 $\pm$ 10.2 $\bullet$                & 75.50 $\pm$ 7.40  $\bullet$                 & 60.10 $\pm$ 6.20  $\bullet$                & 61.07 $\pm$ 8.87 $\bullet$                & 56.45 $\pm$ 7.27   $\bullet$             & \textbf{90.22 $\pm$ 1.62}       \\
mnist                                        & 65.30  $\pm$ 3.69 $\bullet$                   & 84.32 $\pm$ 3.21                          & 76.85 $\pm$ 3.50  $\bullet$                 & 52.80 $\pm$ 3.08  $\bullet$                & 71.55 $\pm$ 4.47 $\bullet$                & 52.71 $\pm$ 3.54   $\bullet$             & 84.04 $\pm$ 2.06       \\
shuttle                                      & 45.59  $\pm$ 22.6 $\bullet$                   & 66.05 $\pm$ 38.9 $\bullet$                & 63.25 $\pm$ 28.2  $\bullet$                 & \multicolumn{1}{c}{--}                & 65.98 $\pm$ 20.0 $\bullet$                & 56.47 $\pm$ 23.8   $\bullet$             & \textbf{96.52 $\pm$ 3.07}       \\ \midrule
\textsc{Eulac} w/ t/ l                                & \multicolumn{1}{c}{9/ 0/ 0}                   & \multicolumn{1}{c}{7/ 2/ 0}               & \multicolumn{1}{c}{9/ 0/ 0}                 & \multicolumn{1}{c}{9/ 0/ 0}                & \multicolumn{1}{c}{9/ 0/ 0}               & \multicolumn{1}{c}{9/ 0/ 0}              & rank first 7/ 9       \\ 
\bottomrule       
\end{tabular}
}
\end{table}

\begin{table}[!t]
\centering
\caption{AUC on benchmark datasets. The best method is emphasized in bold. Besides, $\bullet$ indicates that \textsc{Eulac} is significantly better than others (paired $t$-tests at 5\% significance level).}
\vspace{-1mm}
\label{table:experiment1-AUC}
\resizebox{1\textwidth}{!}{
\begin{tabular}{llllllll}
\toprule
\multirow{1}{*}{Dataset} & \multicolumn{1}{c}{\multirow{1}{*}{OVR-SVM}}                &  \multicolumn{1}{c}{\multirow{1}{*}{OSNN}}      & \multicolumn{1}{c}{\multirow{1}{*}{EVM}}    & \multicolumn{1}{c}{\multirow{1}{*}{LACU-SVM}} & \multicolumn{1}{c}{PAC-iForest}      & \multicolumn{1}{c}{\multirow{1}{*}{\textsc{Eulac}}} \\ \midrule
usps                                         & 36.52 $\pm$  4.70  $\bullet$            & 90.73 $\pm$ 3.03  $\bullet$                     & 68.75 $\pm$ 6.20 $\bullet$                  & 14.05 $\pm$ 4.86  $\bullet$                    & 70.68 $\pm$ 12.7  $\bullet$         & \textbf{97.05 $\pm$ 1.08}       \\
segment                                      & 68.43 $\pm$  8.12  $\bullet$            & 92.77 $\pm$ 3.26  $\bullet$                     & 80.15 $\pm$ 9.45 $\bullet$                  & 44.15 $\pm$ 11.7  $\bullet$                    & 77.08 $\pm$ 12.7  $\bullet$       & \textbf{95.60 $\pm$ 4.01}       \\
satimage                                     & 43.22 $\pm$  10.7  $\bullet$            & 71.91 $\pm$ 13.2  $\bullet$                     & 78.77 $\pm$ 6.56 $\bullet$                  & 26.51 $\pm$ 12.2  $\bullet$                    & 86.88 $\pm$ 4.68  $\bullet$       & \textbf{94.92 $\pm$ 2.80}       \\
optdigits                                    & 25.95 $\pm$  2.40  $\bullet$            & 94.58 $\pm$ 2.11  $\bullet$                     & 82.01 $\pm$ 2.89 $\bullet$                  & 9.161 $\pm$ 2.21  $\bullet$                    & 82.82 $\pm$ 5.81  $\bullet$       & \textbf{99.08 $\pm$ 0.66}       \\
pendigits                                    & 34.66 $\pm$  3.39  $\bullet$            & 94.12 $\pm$ 2.68  $\bullet$                     & 89.91 $\pm$ 3.34 $\bullet$                  & 17.15 $\pm$ 4.15  $\bullet$                    & 87.93 $\pm$ 3.64  $\bullet$       & \textbf{97.27 $\pm$ 2.23}        \\
SenseVeh                                     & 42.14 $\pm$  9.38  $\bullet$            & 55.48 $\pm$ 8.65  $\bullet$                     & 65.79 $\pm$ 9.73 $\bullet$                  & 19.56 $\pm$ 9.74  $\bullet$                    & 71.02 $\pm$ 16.6  $\bullet$       & \textbf{88.61 $\pm$ 2.90}       \\
landset                                      & 44.32 $\pm$  11.1  $\bullet$            & 85.82 $\pm$ 8.48  $\bullet$                     & 80.62 $\pm$ 7.85 $\bullet$                  & 27.98 $\pm$ 16.8  $\bullet$                    & 89.76 $\pm$ 4.05  $\bullet$       & \textbf{97.32 $\pm$ 2.67}       \\
mnist                                        & 38.32 $\pm$  5.26  $\bullet$            & 85.53 $\pm$ 4.12  $\bullet$                     & 65.76 $\pm$ 5.98 $\bullet$                  & 17.32 $\pm$ 5.67  $\bullet$                    & 63.24 $\pm$ 9.27  $\bullet$       & \textbf{93.45 $\pm$ 2.99}       \\
shuttle                                      & 26.85 $\pm$  10.2  $\bullet$            & \textbf{97.52 $\pm$ 2.61}                       & \multicolumn{1}{c}{--}                  & 11.49 $\pm$ 13.5  $\bullet$                    & 93.20 $\pm$ 5.75                  & 79.77 $\pm$ 24.3       \\ \midrule
\textsc{Eulac} w/ t/ l                                & \multicolumn{1}{c}{9/ 0/ 0}              & \multicolumn{1}{c}{8/ 0/ 1}                    & \multicolumn{1}{c}{9/ 0/ 0}                 & \multicolumn{1}{c}{9/ 0/ 0}                    & \multicolumn{1}{c}{8/ 0/ 1}         & rank first 8/ 9       \\ 
\bottomrule
\end{tabular}
}
\end{table}

\subsection{Comparison on Deep models}
\label{sec-appendix:experiments-2}
Here we elaborate on how we implement DNN-based \textsc{Eulac}, including descriptions for datasets and contenders.

\textbf{Datasets and configurations.}
In the experiment, mnist, SVHN, Cifar-10 are used following the protocol in previous studies~\citep{conf/eccv/NealOFWL18}. All three datasets contain ten classes, among which we randomly choose six classes as known classes and four other classes as augmented classes. The data partition follows the standard split and meanwhile maintains the balance among different classes, which means that we subsample an equal number of instances for each class from the standard split. Detailed descriptions for datasets are as follows.
\begin{itemize}
\item \textbf{mnist} contains 28$\times$28 monochrome images of digital numbers. The training set of mnist experiment contains both instances of the labeled part and unlabeled part. The labeled part contains instances of the known six classes randomly picked in advance, with 6000 instances per class randomly subsampled from the standard split of the mnist dataset. In comparison, the unlabeled part contains unlabeled instances from 10 classes, with 1000 instances in each class. Thus the total number of instances in the training set is 46000. The testing set consists of ten classes with 100 instances in each class, and thus the total number of the testing set is 10000. The dataset can be downloaded from~\url{http://yann.lecun.com/exdb/mnist/}.
\item \textbf{SVHN} is short for Street View House Numbers, which includes 32$\times$32 colored image samples of numbers. Similar to previous experiments, the training set consists of the six known classes with 4500 subsampled instances in each class and ten unlabeled classes with 3000 subsampled instances in each class, and the total number of instances is 57000. The testing set contains instances of all ten classes with 100 instances in each class. SVHN dataset can be downloaded from~\url{http://ufldl.stanford.edu/housenumbers/}.
\item \textbf{Cifar-10} consists of ten classes of natural images of size 32$\times$32. The training set contains the six known classes with 5000 subsampled instances in each class and unlabeled instances from all ten classes with 900 subsampled instances in each class; thus, the total number of instances of the training set is 39000. The testing set contains all ten classes with 100 instances per class, as is described before. Cifar-10 dataset can be downloaded from~\url{https://www.cs.toronto.edu/~kriz/cifar.html}.
\end{itemize}
In all experiments, data used for evaluation are never used in the training stage (namely, the evaluation data are not contained in labeled and unlabeled training data).

\textbf{Contenders.}
Here we introduce the technical details of the compared approaches. 
\begin{itemize}
\item \textbf{Softmax Threshold} is the counterpart of OVR-SVM with deep model, where classifiers $f_1(\x),\dots,f_K(\x)$, the outputs of the neural network, are trained to approximate the posterior probabilities $p(y=k|\x)$, $k\in[K]$. An instance is predicted to the augmented class when the maximum outputs $\max_{k\in[K]} f_k(\x)$ is less than a given threshold.
\item \textbf{OpenMax}~\citep{conf/cvpr/BendaleB16} can be considered as a calibrated version of Softmax Threshold method. In this method, Weibull calibration is implemented as an augment to the SoftMax method while replacing the Softmax layer with a new OpenMax layer, which outputs calibrated posterior probabilities.
\item \textbf{Generative OpenMax}~\citep{conf/bmvc/GeDG17} can be simply concluded as an OpenMax using samples generated from GAN. Inspired by this method, other variants of Generative OpenMax include using an encoder-decoder network instead of GAN, as implemented in~\citet{conf/eccv/NealOFWL18}. 
\item \textbf{OSRCI}~\citep{conf/eccv/NealOFWL18} is the short for Open-set Recognition using Counterfactual Images (OSRCI). The method is similar to Generative OpenMax and is also an data-augmentation-based techniques. While unlike Generative OpenMax, this method uses a counterfactual image generation technique to train an additional $f_{K+1}$ classier for the augmented class.
\end{itemize}
\textbf{Parameters setting.} Since results for other compared methods are taken from~\citep{conf/eccv/NealOFWL18}, we only introduce our model details. For all datasets, we employ the sigmoid loss $\psi(z) = 1/(1+\exp(z))$ for training. The loss is optimized by Adam optimizer with learning rate $10^{-5}$ and weight decay $5 \times 10^{-3}$. Following~\citep{conf/nips/KiryoNPS17}, we use various network architectures for different datasets.

\begin{itemize}
  \item \textbf{mnist.} We use a convolutional fully-connected neural network whose architecture consists of two convolutional layers with 3$\times$3 kernels of 32 and 64 filters, followed by five hidden layers of 390 units each with dropout applied before every dense layer. The MLP is trained for 200 epochs.
  
  \item \textbf{SVHN.} We choose VGG-13 architecture with ReLU activation without dropout. Model parameters are trained for 400 epochs.
  
  \item \textbf{Cifar-10.} We choose VGG-16 architecture with batch normalization before every ReLU activation without dropout. Model parameters are trained for 400 epochs.
\end{itemize}

Unlike the RKHS-based \textsc{Eulac}, it is surprising that a relatively higher mixture proportion will help the neural networks converge better. We thus select the mixture proportion $\theta$ as the one achieving the minimal training loss from interval $[0.6,0.9]$ with the grid 0.05. We present the empirical observation on the relationship between the AUC and training loss as follows.

\paragraph{More empirical results.} Table~\ref{table:auc-loss} shows the training loss and AUC of networks trained with various mixture proportions. A large mixture proportion could help the networks converge better, leading to a higher AUC. Figure~\ref{fig:SVHN-loss} and Figure~\ref{fig:SVHN-auc} present the learning curve on SVHN as a more detailed illustration.

\begin{table}[!t]
\centering
\caption{The relationship between the training loss and AUC with various mixture proportions.}
\label{table:auc-loss}
\begin{tabular}{ccccccc}
\toprule
\multirow{2}{*}{$\theta$} & \multicolumn{2}{c}{mnist}       & \multicolumn{2}{c}{Cifar-10}    & \multicolumn{2}{c}{SVHN}        \\ \cmidrule{2-7} 
                                & training loss    & AUC       & training loss     & AUC       & training loss     & AUC      \\ \midrule
0.6                             & 82.9 $\pm$ 0.4 & 98.9 $\pm$ 0.4 & 95.7 $\pm$ 0.2 & 55.5 $\pm$ 1.6 & 78.4 $\pm$ 0.2 & 78.0 $\pm$ 0.3 \\
0.7                             & 79.9 $\pm$ 0.3 & 98.8 $\pm$ 0.5 & 91.8 $\pm$ 0.2 & 82.3 $\pm$ 1.4 & 73.0 $\pm$ 0.1 & 80.4 $\pm$ 3.4 \\
0.8                             & 76.4 $\pm$ 0.3 & 99.1 $\pm$ 0.4 & 81.2 $\pm$ 0.2 & 82.7 $\pm$ 1.0 & 69.2 $\pm$ 0.2 & 81.7 $\pm$ 3.1 \\
0.9                             & 55.2 $\pm$ 4.0 & 98.6 $\pm$ 0.6 & 66.2 $\pm$ 0.2 & 84.9 $\pm$ 2.0 & 45.4 $\pm$ 1.2 & 90.2 $\pm$ 2.8 \\ \bottomrule
\end{tabular}
\end{table}

\begin{figure}[!t]
\raggedright 
 \begin{minipage}[t]{0.49\textwidth}
    \centering
    \includegraphics[height=0.5\textwidth]{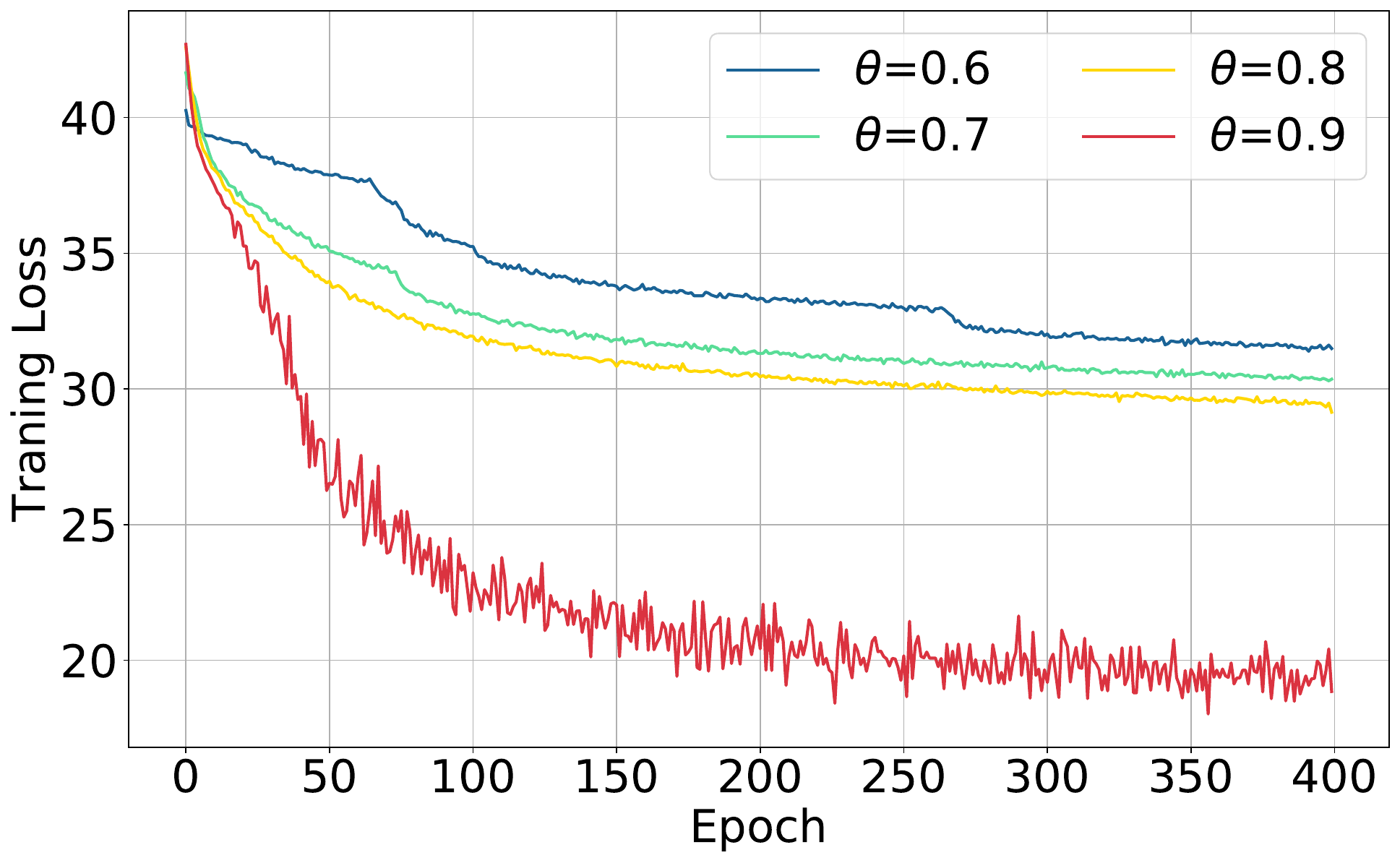}
    \setcaptionwidth{5.8 cm}
    \caption{Training loss on SVHN}
    \label{fig:SVHN-loss}
\end{minipage} 
\begin{minipage}[t]{0.49\textwidth}
    \centering
    \includegraphics[height=0.5\textwidth]{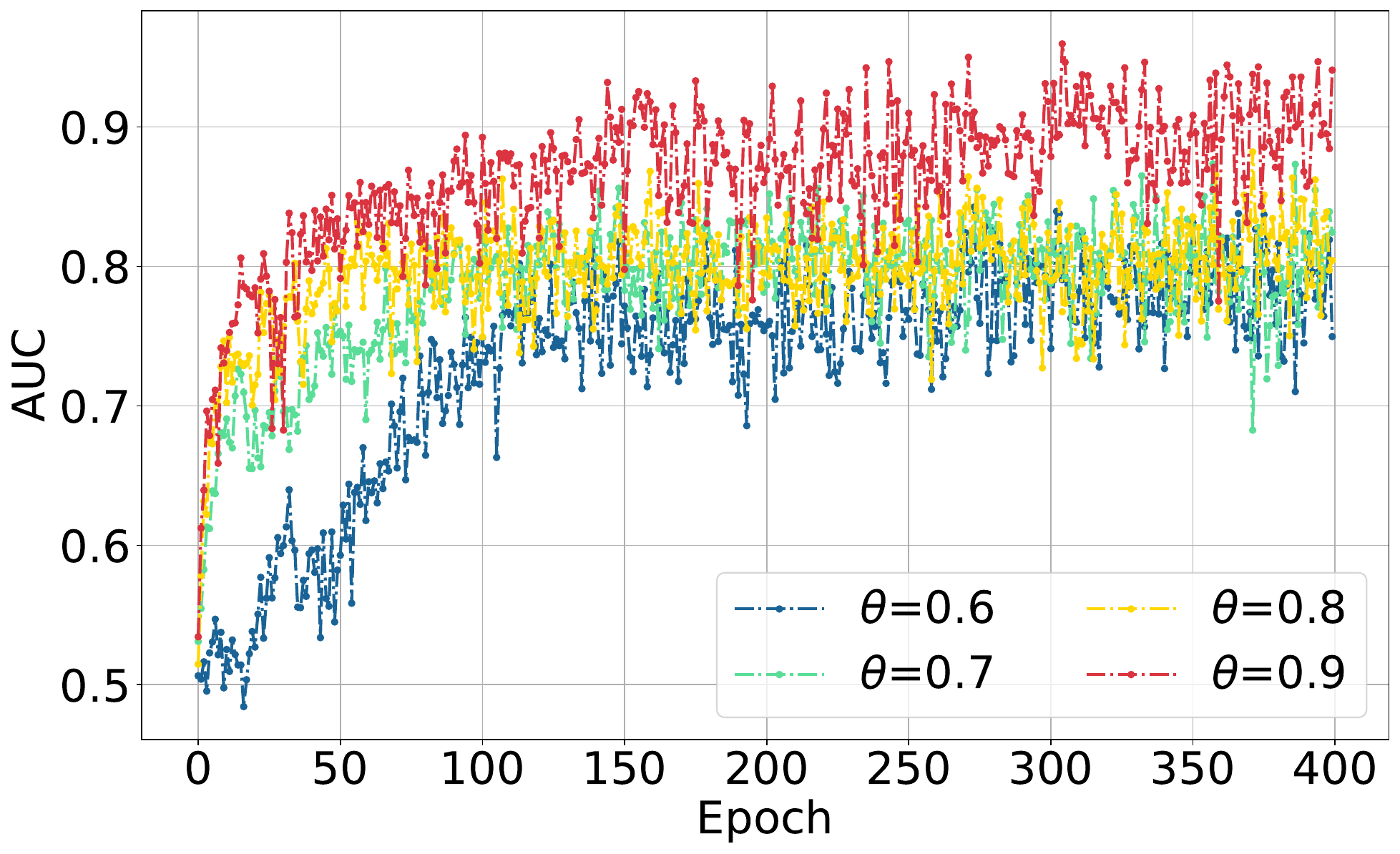}
    \setcaptionwidth{7 cm}
    \caption{AUC on SVHN}
    \label{fig:SVHN-auc}
\end{minipage}  
  \vspace{-3mm}
\end{figure}

\subsection{Comparisons in complex changing environments}
\label{sec-appendix:experiment-3}
In this part, we further describe contenders of experiments where the augmented classes appear with the distribution change on the prior of known classes. There are five compared methods, where two do not take the distribution change of priors into account (LACU-SVM, \textsc{Eulac}-base) and two only consider the distribution change for training classifiers of known classes (OVR-shift, \textsc{Eulac}-base++) and thus is biased. \textsc{Eulac}-shift is the unbiased estimator. For all contenders, we use KME-shift for estimating the mixture proportion $\theta$ or the class prior $\theta_{te}^k$, $k\in[K]$.

\textbf{Contenders.} The detailed descriptions for contenders are listed as follows. 
\begin{itemize}
  \item \textbf{LACU-SVM} is the original LACU-SVM method, which does not consider the prior change on known classes.
  \item \textbf{\textsc{Eulac}-base} is the basic version of our proposal, which minimizes the LAC risk $R_{LAC}$~\eqref{eq:expect-PU} on the kernel-based hypothesis set with the squared loss $\psi(z) = (1-z)^2/4$. This method does not take the prior change into account.
  \item \textbf{OVR-shift} is a variant of OVR-SVM for handling the distribution change of the prior on known classes, where instances from the known classes are reweighed to match their prior in the testing distribution. The classifiers are trained by minimizing the empirical version of
\begin{equation}
\label{eq:OVR-shift}
 \min_{f_1,\dots,f_{K}}\sum_{i=1}^K \theta_{te}^i\cdot \mathbb{E}_{\mathbf{x}\sim p_{X|Y}^\textsf{kc}(\x|i)}\Bigg[\psi(f_i(\mathbf{x}))+\sum_{k=1,k\neq i}^{K}\psi(-f_k(\mathbf{x}))\Bigg].
\end{equation}
The testing instance is predicted as augmented when $\max_{k\in[K]} f_k(\x)<0$. 
  \item \textbf{\textsc{Eulac}-base++} is a variant of \textsc{Eulac}-base, where OVR classifiers of the known classes $\{f_1,\dots,f_K\}$ are trained with reweighed instances for matching their prior in the testing distribution. The classifier of augmented classes ($f_{\textsf{ac}}$) is trained following the same scheme as \textsc{Eulac}-base.
  \item \textbf{\textsc{Eulac}-shift} is our proposal, which minimizes the LAC risk with the prior change $R_{LAC}^{shift}$ on the kernel-based hypothesis set. The method is unbiased over the testing distribution.
\end{itemize}
The parameters for all compared algorithms are the same as those in Appendix~\ref{sec-appendix:experiments-1}. We set the threshold $\nu=0.25$ for the KME-shift estimator.
\end{document}